\def\reals{\mathds{R}}
\let\chapter\section
\DeclareMathOperator*{\argmin}{arg\,min}
\DeclareMathOperator*{\argmax}{arg\,max}
\newcommand{\norm}[1]{\left\lVert#1\right\rVert}
\newcommand{\inv}[0]{}
\newcommand{\sinv}[0]{}
\newcommand*{\bmfontA}{\fontfamily{fvs}\selectfont}
\newcommand*{\bmfontB}{\fontfamily{fi4}\selectfont}
\newcommand*{\bmfontC}{\fontfamily{lmss}\selectfont}
\newcommand*{\bmfontD}{\fontfamily{put}\selectfont}
\newcommand*{\bmfontE}{\fontfamily{ppl}\selectfont}
\newcommand*{\bmfontF}{\fontfamily{cmr}\selectfont}
\DeclareTextFontCommand{\bmfont}{\bmfontB}
\DeclareTextFontCommand{\bmfontt}{\bmfontA}
\DeclareTextFontCommand{\bmfonttt}{\bmfontC}
\DeclareTextFontCommand{\bmfontttt}{\bmfontD}
\DeclareTextFontCommand{\bmfonttttt}{\bmfontE}
\DeclareTextFontCommand{\bmfontttttt}{\bmfontF}
\DeclareTextFontCommand{\bmfonttttttt}{\bmfontG}
\newcommand{\ignore}[1]{}
\newcommand{\icmlcut}[1]{}
\newcommand{\nipscut}[1]{}
\newcommand{\nipsdel}[1]{\delete{\nipscut{#1}}}
\newcommand{\cameraReadyAdd}[1]{#1}
\newcommand{\ph}[1]{\noindent \textbf{#1} ---}
\newcommand{\tofix}[1]{\textcolor{red}{#1}}
\newcommand{\tempcut}[1]{}
\newcommand{\barzan}[1]{\textcolor{blue}{BM: #1}}
\definecolor{mypink3}{cmyk}{0, 0.7808, 0.4429, 0.1412}
\newcommand{\junkun}[1]{\textcolor{magenta}{#1}}
\newcommand{\boyu}[1]{\textcolor{orange}{boyu: #1}}
\newcommand{\jarrid}[1]{\textcolor{orange}{Jarrid: #1}}
\newcommand{\fix}[1]{#1}
\newcommand{\delete}[1]{}
\newtheorem{theorem}{Theorem}
\newtheorem{lemma}{Lemma}
\newtheorem{definition}{Definition}
\title{Revisiting Projection-Free Optimization 
           for Strongly Convex Constraint Sets}
\author{
    Jarrid Rector-Brooks \\
    2260 Hayward St \\
    Ann Arbor, MI, 48104 \\
    University of Michigan, Ann Arbor \\
    \texttt{jrectorb@umich.edu}
    \And
    Jun-Kun Wang\thanks{Work performed while a PhD student at the University of Michigan, Ann Arbor.} \\
    226 Ferst Drive NW \\
    Atlanta, GA, 30332 \\
    Georgia Institute of Technology \\
    \texttt{jimwang@gatech.edu}
    \And
    Barzan Mozafari \\
    2260 Hayward St \\
    Ann Arbor, MI, 48104 \\
    University of Michigan, Ann Arbor \\
    \texttt{mozafari@umich.edu}
}
\begin{document}

\maketitle

\begin{abstract}
%A common bottleneck in applying machine learning to massive datasets is the 
%	projection step of the optimization. 
 % ta inja
\icmlcut{FW 
	optimization has attracted much interest in the recent years, due to its scalability  
	as well as the sparsity of its solution.  
While faster at each iteration due to its lack of projection step,
	FW approaches in general 
                 offer a slower convergence rate than (accelerated) gradient descent.}
%The only variant of FW known to match the gradient descent's convergence rate 
%	 requires a line search at each iteration, 
%	 	and is hence significantly slower overall.
		
We revisit the Frank-Wolfe (FW) optimization under strongly convex constraint sets. 
	We provide a faster convergence rate for FW without line search, 
		showing that a previously overlooked variant of FW 
			is indeed faster than the standard variant. 
%		(due to its extra computation) 
%		that has previously been largely overlooked, due to its 
%			additional computation (i.e., averaging the update vectors)
%				at each iteration compared to standard variants of FW.
%We improve
%        on the known results in the literature and discover that, surprisingly, this non-conventional
%        variant 
 %   converges 
%	significantly faster than the standard variant, 
%	which more than compensates for its heavier computation at each iteration.
%We improve on the best known results %
%	for FW's convergence rate without line search from $O(\frac{1}{t})$ to $O(\frac{1}{t^2})$ provide the fastest known convergence rate for FW without requiring line search at each iteration. 
With line search, we show that FW   can converge to the global optimum,
		even for smooth functions that are not convex, but are quasi-convex and locally-Lipschitz.
We also show that, for the general case of (smooth) non-convex  functions, 
    FW with line search converges with high probability to a stationary point at a rate of $O(\frac{1}{t})$, 
        as long as   the constraint set is strongly convex---one of the fastest 
        		convergence rates in   non-convex optimization.

\icmlcut{This new finding has significant ramifications in practice, 
%		as many of the  norm constraints are  quite common in optimization-based machine learning,
%			
%				(e.g., ) and 
			as many classification, regression, multitask learning, and collaborative filtering
				tasks use norm constraints that are strongly convex\ignore{,
                    e.g., generalized linear models with $\fix{l_{2}$ norm}\jarrid{$l_2$-norm -> $l_2$ norm}, 
						squared loss regression with \fix{$l_p$ norm}\jarrid{$l_p$-norm -> $l_p$ norm}, 
							multitask learning with Group Matrix norm, 
							and matrix completion with Schatten norm}.
	We also empirically confirm our theoretical results, showing 
	that Primal Averaging's faster convergence rate 
		(and hence, fewer iterations) 
			more than compensates for its heavier computation at each iteration.} 
\end{abstract}

\section{Introduction}
\label{sec:intro}
% Here, a common workflow is to design an objective function, solve an optimization problem, and then analyze the results. 
% A fundamental building block in this workflow, which is also the computational bottleneck, 	is the optimization step.

\tempcut{
Machine learning has become the de facto approach to predictive analytics on big data.
 Here, a common workflow is to train a model, 
	analyze the results, adjust the parameters or features, and then repeat the process until a desired objective is met.
 %	 inspect the results, adjust the parameters or add/remove features, and then repeat this process
 %  to adopt a data model and then use an optimization algorithm to \emph{train} the model.
Solving an optimization problem is often a fundamental building block of the training phase,
	which is also the computational bottleneck.
}

 A popular family of optimization algorithms are so-called gradient descent algorithms:  iterative algorithms that are comprised of a gradient descent step
	at each iteration, followed by a projection step when there is a feasibility constraint.
 The purpose of the projection is to ensure that the update vector remains within the feasible set. 
\ignore{
The projection cost varies depending on the specific constraint. 
For example, 
    projecting onto an $l_1$, $l_2$, or $l_\infty$ norm ball has a closed-form, which can be computed efficiently~\cite{H14}.
}

In many cases, however, the projection step may have no closed-form and thus requires solving another optimization problem itself (e.g., for $l_{1.5}$ norm balls or matroid polytopes~\cite{H14,HK12}), 
     the closed-form may exist but involve an expensive computation
  (e.g., the SVD of the model matrix for Schatten-$1$, Schatten-$2$, and Schatten-$\infty$ norm balls~\cite{H14}),
  or there may simply be no method available for computing the projection in general (e.g., the convex hull of rotation matrices~\cite{HazanRotations}, which arises as a constraint set in online learning settings~\cite{HazanRotations}). 
In these scenarios, 
    each iteration of the gradient descent may require many ``inner'' iterations to compute
	the projection~\cite{J10,FWLinConvPolytope,HK12}.
This makes
	the projection step quite costly, and can account for much of the execution 
time of each iteration (e.g., see Appendix B).  
\tempcut{
Another motivation for FW optimization is that 
	the performance of gradient descent is
    heavily reliant on the step sizes at each iteration, causing  
    	practitioners to spend a significant amount of time finding an optimal sequence
    of step sizes.
}

\ph{Frank-Wolfe (FW) optimization}
 In this paper, we focus on (FW) approaches, also known as \emph{projection-free} or \emph{conditional gradient} algorithms~\cite{frank1956algorithm}.
Unlike gradient descent, these algorithms avoid the projection step altogether by ensuring that the update vector
     always lies within the feasible set.
At each iteration, FW solves   
 %FW assumes a smooth objective function is given and requires access to a linear oracle that provides output $O(v) := \arg \min_{w \in \Omega} \langle w, v \rangle$, for any vector $v \in \mathcal{R}^d$.
%It is widely recognized that solving 
a linear program over a constraint set.
Since linear  programs  have closed-form solutions for most constraint sets,
		each iteration of FW is, in many cases, more cost effective than
			conducting a gradient descent step and then projecting it back to the constraint set~\cite{J13,HK12,H14}.

Another main advantage of FW is the sparsity of its solution. 
Since the solution of a linear program is always a vertex (i.e., extreme point) of
the feasible set (when the set itself is convex), 
     each iteration of FW can add, at most, one new vertex
     to the solution vector.  Thus, at iteration $t$, the solution is
     	a combination of, at most, $t + 1$ vertices of the feasible set, thereby
    		guaranteeing the sparsity of the eventual solution~\cite{FWSparsityWithCoresets,J13,JaggiSparsityThesis}.

For these reasons, FW optimization has drawn growing interest in recent years, 
especially in matrix completion, structural SVM, 
		computer vision, sparse PCA, metric learning, and many other settings~\cite{J10,SJM13,O16,W16,CVPR15,ImgClassTraceNorm,HK12,LargeScaleConvMinLowRankShalev}.
Unfortunately, while faster in each iteration, 
	standard FW requires many more iterations to converge than gradient descent, 
		and therefore is slower overall. 
        This is because FW's convergence rate is typically $O\left(\frac{1}{t}\right)$ while that 
        of (accelerated) gradient descent is $O\left(\frac{1}{t^2}\right)$, where $t$
        is the number of iterations~\cite{J13}.

\nipscut{
	One    known strategy to improve FW's performance \jarrid{when the constraint set is strongly convex} is
	to include a line search at each iteration~\cite{D15},
		which achieves a convergence rate of $O\left(\frac{1}{t^2}\right)$
            at the cost of making each iteration significantly slower. 
Interestingly, FW without line search is much faster in practice than FW with line search, 
	as the additional cost of   line search at each iteration can outweigh
		the faster convergence rate (see our technical report~\cite{fw_tr}).
}

 %%%%%%%%%%%%%%

 %%%%%%%%%%%%%%
\begin{table*}[t]\centering
%\ra{1.3}
%\hspace*{-1.2cm}
\scalebox{0.76}{
\begin{tabular}{@{}p{4.3cm}p{4.05cm}p{2.3cm}p{2.7cm}p{3.2cm}@{}}\toprule
    & Additional Assumptions \newline about the Loss Function & Constraint Set \newline Assumption & Convergence \newline Rate & Requires Line Search  \newline (In Each Iteration)\\ \midrule
    
    \textbf{Convex Loss Function} \\    
    \hspace{2mm}This Paper & \delete{Optimal points are on the boundary of the constraint set} \fix{None} & Strongly convex & $O\left( \frac{1}{t^2} \right)$ with \newline high probability& No \\ 
    \hspace{2mm}State-of-the-Art Result(s)\\
    \hspace{6mm}\cite{J13} & None & Convex & $O\left( \frac{1}{t} \right)$ & No \\
    \hspace{6mm}\cite{D15} & Strongly convex & Strongly convex & $O\left( \frac{1}{t^2} \right)$ & \textbf{Yes} \\
    \hspace{6mm}\cite{FWLinConvPolytope} & Strongly convex & Polytope & $O\left( \exp\left( -t \right) \right)$ & \textbf{Yes} \\
    \hspace{6mm}\cite{LP66,DR70,D79} & Norm of the gradient \newline is lower bounded &  Strongly convex & $O\left( \exp\left( -t \right) \right)$& No \\ 
    \hspace{6mm}\cite{BeckLinConverge} & $f(x) = \norm{Ax - b}_2^2$ & Convex & $O\left( \exp\left( -t \right) \right)$ & No \\
    \midrule

    \textbf{Quasi-Convex Loss Function} \\
    \hspace{2mm}This Paper & Locally-Lipschitz, \newline \fix{Norm of the gradient \newline is lower bounded} & Strongly convex & $O\left( \min\left(\frac{1}{t^{1 / 3}}, \frac{1}{t^{1 / 2}}\right) \right)$ & \textbf{Yes} \\
    \hspace{2mm}State-of-the-Art Result(s)\\
    \hspace{6mm}Does not exist & Does not exist & Does not exist & Does not exist & Does not exist \\
    \midrule

    \textbf{Non-Convex Loss Function} \\
    \hspace{2mm}This Paper & \delete{Norm of the gradient \newline is lower bounded} \fix{None} & Strongly convex & $O\left( \frac{1}{t} \right)$ with \newline high probability& \textbf{Yes} \\
    \hspace{2mm}State-of-the-Art Result(s)\\
    \hspace{6mm}\cite{S16} & None & Convex & $O\left( \frac{1}{t^{1 / 2}} \right)$ & No \\
\bottomrule
\end{tabular}
}
\parbox{15.5cm}{
\caption{
    Our contributions compared to the state-of-the-art results for projection-free optimization.
    Here, $t$ is the number of iterations.     
    \ignore{, $n$ is the dataset size (number of tuples in the training set), 
    and $c$ is the number of iterations required by a line search (used in~\cite{D15},  
    which can drastically increase optimization time for large datasets)}
	For non-convex functions, convergence is defined 
    in terms of a stationary point instead of a global minimum.
    Note that although our bound is probabilistic for convex loss functions,
    we use no additional assumptions on the loss function
    and do \emph{not} require line search, which can be a costly operation  for big data (see Section~\ref{sec:related}).
}	
\label{tab:summary}
}
\end{table*}

\ignore{
\begin{table*}[h]
  \centering
  \begin{tabular}{lrrl}  \label{tab:1}
  Reference  & \tofix{Constraint set}   & \tofix{Assumption about $f$} 
  & Convergence Rate     \\ 
  & \barzan{is this same thing as `Constraint Set'?}    &  
  \barzan{will it be correct to change this to}
  &      \\ 
  &     &  
  \barzan{`Assumptions about the Objective Function $f$'?}
  &      \\ \hline \hline
 \cite{J13}            &     convex      &             convex            &   $O(\frac{1}{t})$      \\
 \cite{LP66,DR70,D79}            & strongly convex &  convex \& $\| \nabla f(w) \| \geq c >0 , \forall w \in  \Omega$  & $O(\frac{1}{\exp(ct)})$\\ 
 \cite{D15}            & strongly convex &          strongly convex      &   $O(\frac{1}{t^2})$   \\ 
 this paper            & strongly convex &             convex  \&
$\| \Sigma_{i=1}^t \alpha_i \nabla f(z_{i-1}) \| \geq c > 0$ for some weight $\alpha_i$
   &   $O(\frac{1}{c t^2}) $    \\ \hline
  \end{tabular}
 %   \midrule
 \caption{State of the art on projection-free algorithms for smooth \textbf{convex} \fix{loss} functions. Here, $t$ is number of iterations, and $w$ is the update vector.
  \barzan{1) u never refer to this table anywhere
2) here in the caption explain ALL the symbols used in this table, e.g., t, f, w, etc.
3) I see three sections in this table separated by double-horizontal lines, but I don't understand what each of these
sections have in common. i think you need a more obvious and more logical way of separating the related work and your results here
4) ideally, u should replace all 
assumptions with english explanations instead of giving the formulas if possible, e.g., 
is there a name for the $\| \nabla f(w) \| \geq c >0 , \forall w \in  \Omega$ assumption?
5) u just say `mild assumption' but  you need to use a english word that anyone who sees it can immediately 
	understand why your `mild assumption' is weaker than the $\| \nabla f(w) \| \geq c >0 , \forall w \in  \Omega$ assumption
6) looks like your results are useless compared [18]  since they  don't make ANY assumptions on f, require weaker assumptions 
	on constraints (convex vs. strongly convex), and still offers faster convergence rate than you 	: $t^{-1/2}$ versus $t^{-1/3}$
7) the horizontal lines are sometimes cutting into your formula. find a latex trick to fix that.} \junkun{I response to your major concerns by splitting the table into three, which corresponds to convex/ quasi-convex/ nonconvex. For nonconvex function, the convergence rate is about converging to a stationary point, not a global min.}}
  \label{sample-table}
  \begin{tabular}{lrrl}  \label{tab:2}
 %   \midrule
  Reference  & \tofix{Constraint set}   & \tofix{Assumption about $f$} 
  & Convergence Rate     \\ \hline \hline
 this paper            & strongly convex &  
  locally-Lipschitz      &   $O(\frac{1}{t^{1/3} })$                 \\ \hline 
  \end{tabular}
 \caption{State of the art on projection-free algorithms for smooth \textbf{quasi-convex} functions. We are not aware of any related works of FW for this type of functions.}
   \begin{tabular}{lrrl}  \label{tab:3}
 %   \midrule
  Reference  & \tofix{Constraint set}   & \tofix{Assumption about $f$} 
  & \textbf{Local} convergence Rate     \\ \hline  \hline
   \cite{S16}            &     convex      &        none    &   $O(\frac{1}{\sqrt{t}})$    \\ \hline
 this paper            & strongly convex &       $\| \nabla f(w) \| \geq c >0 , \forall w \in  \Omega$    &   $O(\frac{1}{c t}) $           \\ \hline
  \end{tabular}
 \caption{State of the art on projection-free algorithms for smooth \textbf{non-convex} optimization. Note that the convergence rate is in terms of converging to a stationary point instead of a global optimum point, which is different from the above two tables.}
\end{table*}}

We
make several contributions (summarized in Table~\ref{tab:summary}):
%\vspace{-0.2cm}
\begin{enumerate} 
\item 
 We revisit a non-conventional variant of FW optimization,
called Primal Averaging (PA)~\cite{L13}, 
		which has been largely neglected in the past, as 
			it was believed to have the same convergence rate as FW without line search,
				yet incurring extra computations (i.e., matrix averaging step)
				at each iteration.
However, we discover that, when the constraint set is strongly convex,
    this non-conventional variant  enjoys a much faster convergence rate with high probability, 
        $O(\frac{1}{t^2})$ versus $O(\frac{1}{t})$,
		which more than compensates for its slightly more expensive iterations.
        This surprising result has important ramifications in practice, 
%		as many of the  norm constraints are  quite common in optimization-based machine learning,
 			as many classification, regression, multitask learning, and collaborative filtering
				tasks rely on norm constraints that are strongly convex,
                e.g., generalized linear models with $l_{p}$ norm, 
						squared loss regression with $l_p$ norm, 
							multitask learning with Group Matrix norm, 
							and matrix completion with Schatten norm~\cite{kim2010tree,D15,H14}.

\item 
    While previous work on FW optimization has generally focused on 
	convex functions, we show that FW with line search can converge to the global optimum,
		even for smooth functions that are not convex, but are quasi-convex and locally-Lipschitz.

\item We also study the general case of (smooth) non-convex  functions, 
		showing that FW with line search can converge to a stationary point
        at a rate of $O(\frac{1}{t})$ with high probability,
        as long as  the constraint set is strongly convex.
	To the best of our knowledge, we are not aware of such a fast convergence rate in the non-convex optimization literature.\footnote{Without any assumptions, converging to   local optima for continuous non-convex functions  is NP-hard \cite{GDHS17,AABHM17}.}

\item Finally, we conduct extensive experiments on various benchmark datasets, 
		empirically validating our theoretical results, 
%		studying situations where the projection becomes the dominating cost in gradient descent, 
		and comparing the actual performance of various FW variants in practice.

\end{enumerate}

%\barzan{what about non-smooth functions? what is the state of the art on those? do have any results there?}
%\junkun{jk: FW methods do not work on non-smooth functions. There do exists a paper trying to smoothing the non-smooth functions so that FW can work. But, generally FW does not work on non-smooth functions.}

\icmlcut{The rest of this paper is organized as follows. We discuss the related work in Section~\ref{sec:related}, 
	and briefly review the background material in Section~\ref{sec:background}.
We describe the Primal Averaging algorithm and our results for smooth convex functions in Section~\ref{sec:convex}. 
We present our results    
 for smooth quasi-convex functions in Section~\ref{sec:quasi}, and for non-convex functions in Section~\ref{sec:nonconvex}.
Finally, we conduct an extensive set of experiments in Section~\ref{sec:expr}, and conclude in Section~\ref{sec:conclusion}.}

\section{Related Work}
\label{sec:related}

 Table~\ref{tab:summary}  compares the state-of-the-art on projection-free optimization to our contributions.

\ph{Convex optimization}
Garber and Hazan~\cite{D15} show that for strongly convex and smooth loss functions, FW with line search achieves a convergence rate
of $O(\frac{1}{t^2})$ over strongly convex sets. In contrast, we do not need the loss function to be strongly convex.
     Further, they
    	require an exact line search  at each iteration to achieve this convergence 
    rate. Line search, however,    comes with significant downsides. 
     An exact line search solves the problem $\underset{\gamma \in [0, 1]}{\min} f(x + \gamma v)$
    for loss function $f$, solution vector 
    $x \in \reals^n$, and descent direction $v \in \reals^n$.  
    There are several methods for solving this optimization, and 
    choosing the best 
    method is often difficult for   practitioners (e.g., bracketing line searches versus 
    	interpolation ones).
 \icmlcut{For example, bracketing line
    searches (e.g., Golden Section or Fibonacci methods) perform well for functions
    that are non-smooth or have complicated derivatives.  Contrarily, interpolation line searches
    (e.g., Quadratic or Cubic Interpolation) are better if the loss function's derivatives are easy to compute.}
    Moreover, at best,
     these methods converge to the minimum at a rate of 
    $O\left( \frac{1}{t^2} \right)$~\cite{LineSearchSource}.
Approximate line searches   require fewer iterations.  
    However, in using them, one loses most theoretical guarantees provided in previous work, 
    including that of~\cite{D15}. 
    Nonetheless, both exact and inexact line searches involve
    at least one evaluation of the loss function or one of its derivatives, 
	which can be quite prohibitive for large datasets  (see Section~\ref{sec:expr:times}).
    This is because the underlying function for data modeling is typically in the form of 
    a finite sum (e.g., regression loss) over all the data. 
    In comparison, Primal Averaging, which we study and promote,
    does not require a line search and works with a predefined step size.
    Notably, this allows PA to considerably outperform FW with line search 
    (see Section~\ref{sec:expr:times}).
    %\barzan{this makes no sense whatsoever. how is this an `example'?? why is this example in the technical report? if it's in this paper's expr sec then cite that section and if not, cite our tech report~\cite{fw_tr}}

% This is part of the footnote, but makes it too long.
\ignore{The minimum of the new polynomial
    is compared to previously known values of $f$, and their similarity is used to reduce
    the uncertainty interval.  When the quantity $|b - a| \leq \delta$ for some small $\delta$,
    the points $a$ and $b$ are viewed as minimizers and returned.}

\ignore{\footnote{Bracketing line searches find an optimal step size by iteratively
    considering smaller}
}

Prior work~\cite{LP66,DR70,D79}  shows that standard FW without line search for smooth functions can achieve an exponential 
convergence rate, by making a strict assumption that the gradient is lower-bounded everywhere in the feasible set. 
In our analysis of PA, however, we do not assume the gradient is lower-bounded everywhere, allowing our result to be more widely
applicable.

%\barzan{u have taken the whole thing out!! }

\ph{Quasi-convex optimization}
Hazan et al. study quasi-convex and locally-Lipschitz loss functions that admit some saddle points~\cite{HLS15}.
One of the optimization algorithms 
 for this class of functions is the so-called \emph{normalized gradient descent}, 
		which converges to an $\epsilon$-neighborhood of the global minimum. 
		The analysis in \cite{HLS15} is for unconstrained optimization. 
 In this paper, we analyze FW for the same class of functions, but with strongly convex constraint sets. 
Interestingly, when the constraint set is an 
$l_2$ 
ball, FW becomes equivalent to normalized gradient descent.
    In this paper, we both 1) show that FW can converge to a neighborhood
    of a global minimum, and 2) derive a convergence rate.
    \ignore{Not only do we show that FW  can converge to a neighborhood of a global minimum, but we also derive a
          convergence rate.}
 \cite{D79} extends the analysis of FW to a class of quasi-convex functions of the form 
 $f(w) := g(h(w))$, where $h$ is   differentiable and monotonically increasing, and $g$ is a smooth function. 
 Such functions are quite rare in machine learning.  In contrast, 
we study a much more general class of quasi-convex functions, including several popular models (e.g., 
  generalized linear models with a sigmoid loss). 
  
 \ph{Non-convex optimization} While there has been a surge of research on non-convex 
 optimization in recent years~\cite{GDHS17,GHJY15,AABHM17,LSJR16,S16},  
nearly all of it has focused on unconstrained optimization. 
To our knowledge, there are only a few exceptions~\cite{S16,GL16,GHJY15,RSPS16}. 
    \ignore{Lacoste-Julien~}\cite{S16} proves that FW for smooth non-convex  functions 
	converges to a stationary point, at 
    a rate of $O(\frac{1}{\sqrt{t}})$, which matches the rate of projected gradient descent. 
     \cite{RSPS16} extends this and considers a stochastic version of FW for smooth non-convex functions. Furthermore, Theorem 7 of~\cite{YZS14} provides a convergence rate for non-convex  optimization using FW, 
     which is slower than $O(\frac{1}{\sqrt{t}})$. 
     We show in this paper that, for strongly convex sets,
FW converges to a stationary point with high probability much faster: $O(\frac{1}{t})$.

\ignore{
\ph{Scalable systems for machine learning}
\jarrid{I think we can delete this section}
There has been a growing interest in the database community toward designing scalable and faster systems for machine learning.
For example, many interesting trade-offs and techniques have been studied for speeding up the data management and modeling process~\cite{SIG_14_a,SIG_15_a,ICDE_17_a,SIG_17_a}. 
Others have exploited various forms of parallel computation to speed up the training phase~\cite{SIG_14_b,SIG_17_b}.
Other directions include 
	exploiting the additional information interactively provided by an analyst to guide and accelerate the model training~\cite{VLDB_15_a,VLDB_16_b},
 optimizing the computational time in performing linear algebra~\cite{VLDB_16_a},  designing an optimized system for   specific machine learning models~\cite{SIG_17_b}, optimizing feature engineering workflows~\cite{zhang2016materialization},
 	and pushing machine learning computations through joins~\cite{kumar2015learning,kumar2016join}.
\ignore{ thus improving the end-to-end performance. Santoku
\cite{kumar2015demonstration} applies this technique and automatically decide whether to denormalize data or to push the machine learning operations through joins. To mitigate the problem of having to rewrite the machine learning algorithms into factorized versions, a framework of algebraic rewrite rules are proposed, which allows several popular machine learning algorithms to be automatically factorized. They also observe that sometimes it is unnecessary to bring in foreign features to
the training data using foreign key joins, thus increasing training efficiency without significant impact on the accuracy \cite{}.
}

Unfortunately, there has been little work in addressing and understanding the projection overhead for \emph{constrained} optimization problems~\cite{SchattenBallProjection,ProjectOntoL_1Ball}.
As a generic building block of many machine learning tasks, an improved projection-free optimization can greatly benefit a wide range of data systems in this area. As shown in Section~\ref{sec:expr}, whenever the projection does not have an \emph{efficient} closed-form, %a wide range of constrained optimization problems in data modeling, 
the projection time   
can be quite substantial.
}

\begin{algorithm*}[t] 
   \begin{algorithmic}[1]
   \caption{Standard Frank-Wolfe algorithm}\label{alg:fw}
\STATE Input: loss $f : \Omega \to \reals$.
\STATE Input: linear opt. oracle $\mathcal{O}(\cdot)$ for $\Omega$.
%\STATE Input (Option (A)): decay learning rate $\{ \gamma_t \in [0,1] \}_{t=1, 2, \ldots}$
\STATE Initialize: any $w_{1} \in \Omega$.
\FOR{$t=1, 2, 3, \dots$}
\STATE $ v_t \leftarrow \mathcal{O}(\nabla f(w_{t}) ) = \arg\min_{v \in \Omega} \langle v, \nabla f(w_{t})  \rangle $.
\STATE Option (A): Predefined decay learning rate $\{ \gamma_t \in [0,1] \}_{t=1, 2, \ldots}$
\STATE Option (B): $\gamma_t$$=$$\arg\min_{\gamma \in [0,1]}
 \gamma \langle v_t - w_{t} , \nabla f(w_{t}) \rangle + \gamma^2 \frac{L}{2} \| v_t - w_{t} \|^2  $.
\STATE $w_{t+1} \leftarrow (1 - \gamma_t) w_{t} + \gamma_t v_t$. 
\ENDFOR
\end{algorithmic}
\end{algorithm*}

\begin{algorithm*}[t] 
\begin{algorithmic}[1] 
   \caption{Primal Averaging} \label{alg:Nest2}
\STATE Initialize any $v_{0} \in \Omega \subset \reals^d$. Set $w_0 = v_{0}$.
\FOR{$t=1, 2, 3, \dots$}
\STATE \quad  $\gamma_t = \frac{2}{t+1}$.
\STATE \quad  $z_{t-1} = (1 - \gamma_t) w_{t-1} + \gamma_t v_{t-1}$.
\STATE \quad  Option (A): $p_t = \Sigma_{i=1}^t \frac{\theta_i}{\Theta_t}  \nabla f(z_{i-1})$, where $\Theta_t = \Sigma_{i=1}^t \theta_i$, \fix{$\theta_t = t$,} and $\frac{\theta_t}{ \Theta_t }= \gamma_t$.
\STATE \quad  Option (B): $p_t = \nabla f( z_{t-1} )$.
\STATE \quad  $v_{t} = \underset{ v \in \Omega}{ \arg\min} \langle v, p_t \rangle $.
\STATE \quad  $w_{t} = (1 - \gamma_t) w_{t-1} + \gamma_t v_t$.
%or $w_{t} = \arg\min_{\gamma \in [0,1]} f( w_{t-1} + \gamma_t ( v_t - w_{t-1}) )$.
\ENDFOR
\end{algorithmic}
\end{algorithm*}

\section{Background}
\label{sec:background}

%\inv	
\subsection{Preliminaries}
\label{sec:prelim}

\icmlcut{
\delete{
We first review the definition of convexity and strong convexity.
}

\jarrid{Delete definition of strongly convex sets.}
}

Strongly convex constraint sets are quite common in  machine learning.
For example, when ${p \in (1, 2]}$,
	$\mathbf{l_p}$ balls  $\{ u \in \reals^n : \norm{u}_p \leq r\}$
	and Schatten-$p$ balls $\{X \in \reals^{m \times n} : \norm{X}_{\mathds{S}_p} \leq r\}$
	are all strongly convex \cite{D15}, 
            where 
            $\norm{X}_{\mathds{S}_p} = \left(\sum_{i = 1}^{\min(m, n)} \sigma(X)_i^p\right)^{1 / p}$         
            is the Schatten-$p$ norm and $\sigma(X)_i$ is the $i^{th}$ largest singular value of $X$. Group $l_{p, q}$ balls, used in multitask learning~\cite{D15,kim2010tree}, are also strongly convex when ${p, q \in (1, 2]}$.
\icmlcut{\begin{definition}[\textbf{Smooth functions}]
\delete{
A function $f: \reals^d \rightarrow \reals$ is $L$-smooth with respect to 
a norm $\norm{\cdot}$, if for every $u, v \in \reals^d$, $f$ satisfies
}
\[ | f( u ) - f(v) - \nabla f(v)^\top ( u - v ) | \leq \frac{L}{2} \norm{u - v}^2 \]
\end{definition}
}
In this paper, we use the following definitions.

\cameraReadyAdd{
\begin{definition}[\textbf{Strongly convex set}]
 \label{def:strongConvSet}
 A convex set $\Omega \subseteq \reals^d$ is an 
\emph{$\alpha$-strongly convex set} with respect to a norm $\| \cdot \|$
if for any $u, v \in \Omega$ and any $\theta \in [0,1]$,
the ball induced by $\| \cdot \|$ which is centered at 
$\theta u + ( 1 - \theta) v$ with radius 
$\theta (1 - \theta) \frac{\alpha}{2} \| u - v \|^2$ is also included in $\Omega$.
\end{definition}
}

\begin{definition}[\textbf{Quasi-convex functions}]
    A function \newline ${f: \reals^d \rightarrow \reals}$ is quasi-convex if for all ${u, v \in \reals^d}$
    such that ${f(u) \leq f(v)}$, it follows that ${\langle \nabla f(v), u - v \rangle \leq 0}$,
    where ${\langle \cdot, \cdot \rangle}$ is the standard inner product.
\end{definition}

\begin{definition}[\textbf{Strictly-quasi-convex functions}]
    A function ${f: \reals^d \rightarrow \reals}$ is strictly-quasi-convex if it is
    quasi-convex and its gradients only vanish at the global minimum.  That is, for all
    $u \in \reals^d$, it follows that $f(u) > f(u^*) \Rightarrow \norm{\nabla f(u)} \neq 0$
    where $u^*$ is the global minimum.
\end{definition}

\ignore{
\begin{definition}[\textbf{Locally-Lipschitz functions}]
\nipsdel{
    Let     $\mathds{B}_r(x)$ denote the Euclidean norm ball centered at $x$ of radius $r$
    where $x \in \reals^d$ and $r \in \reals$. 
  Furthermore, let $G, \epsilon \geq 0$.
    Then for arbitrary set $\mathcal{K}$, define ${f: \mathcal{K} \rightarrow \reals}$.  We say
    $f$ is $(G, \epsilon, x)$-locally-Lipschitz, if for each $u, v \in \mathds{B}_\epsilon(x)$
    the following holds:
}
    \[ \left| f(u) - f(v) \right| \leq G \norm{u - v} \]
\end{definition}
}

\begin{definition}[\textbf{Strictly-locally-quasi-convex functions}]
    Let $u, v \in \reals^d, \kappa, \epsilon > 0$.  
    \fix{Further, write $\mathds{B}_r(x)$ as the Euclidean norm ball 
    centered at $x$ of radius $r$ where $x \in \reals^d$ and $r \in \reals$.}
    We say
    ${f: \reals^d \rightarrow \reals}$ is $(\epsilon, \kappa, v)$-strictly-locally-quasi-convex
    in $u$ if at least one of the following applies:
    \begin{enumerate}
        \item
            $f(u) - f(v) \leq \epsilon$

        \item
            $\norm{\nabla f(u)} > 0$ and for every $y \in \mathds{B}_{\frac{\epsilon}{\kappa}}(v)$ 
            it holds that ${\langle \nabla f(u), y - u \rangle \leq 0}$
    \end{enumerate}
\end{definition}

\tempcut{Note that if $f$ is both strictly-quasi-convex and locally-Lipschitz, 
  it is also  strictly-locally-quasi-convex~\cite{HLS15}.}  
  
\subsection{A Brief Overview of Frank-Wolfe (FW)}
\label{sec:fw}

The Frank-Wolfe (FW) algorithm (Algorithm~\ref{alg:fw}) attempts to solve the 
constrained optimization problem $\underset{x \in \Omega}{\min} f(x)$
for some convex constraint set $\Omega$ (a.k.a. feasible set) and some function $f: \Omega \rightarrow \reals$. 
FW   begins with an initial solution $w_0 \in \Omega$.
Then, at each iteration, it computes a search direction $v_t$ by
minimizing the linear approximation of $f$ at $w_t$, $v_t = \underset{v \in \Omega}{\min} \langle v, \nabla f(w_t) \rangle$,
where $\nabla f(w_t)$ is the gradient of $f$ at $w_t$.
 Next, FW produces a convex combination of the current iterate $w_t$ and 
the search direction $v_t$ to find the next iterate $w_{t + 1} = (1 - \gamma_t) w_t + \gamma_t v_t$
where $\gamma_t \in [0, 1]$ is the learning rate for the current iteration. 
\tempcut{Note that $w_{t + 1}$ is always in $\Omega$ since $w_t, v_t \in \Omega$
and $\Omega$ is a convex set.}\ignore{
    For $x, y \in U$ where $U$ is a convex set, the convex
    combination $(1 - \theta)x + \theta y$ is also in $U$
    for $\theta \in [0, 1]$.
}
There are a number of ways to choose the learning rate $\gamma_t$.  
Chief among these are setting \fix{$\gamma_t = \frac{2}{t + 1}$} 
(Algorithm~\ref{alg:fw}, option A) or finding $\gamma_t$ via  line search (Algorithm~\ref{alg:fw}, option B).  \icmlcut{
    Interestingly, the method for selecting
    the learning rate affects FW's convergence rate.
    While it is known that the standard convergence rate of FW is
    $O( \frac{1}{t^2} )$.
}
 \icmlcut{With recent advances in matrix completion, structural SVM, and
		computer vision~\cite{J10,SJM13,O16,W16,CVPR15,H13}, FW has seen a marked uptick in interest~\cite{D15,J13,S16}.
This is mainly due to FW's scalability and superiority over projected gradient descent methods, as well as
its ability to produce sparse solutions.  
Specifically, because FW ensures that the iterate at each step remains within the constraint set, it has no need
for the costly projection step required by projected gradient descent 
(see Section~\ref{sec:overhead} for an empirical analysis).} 

\tempcut{
As mentioned in Section~\ref{sec:intro},
	a key reason for FW's popularity is that it 
	ensures   iterate at each step remains within the constraint set, hence eliminating the need
for the costly projection step required by projected gradient descent techniques.
That is, for the current iterate $w_t$, FW must solve the usually inexpensive linear
program
\[ \underset{v \in \Omega}{\argmin} \langle \nabla f(w_t), v \rangle \]
while projected gradient descent must solve the following 
$$\Pi_\Omega (\nabla f(w_t))$$
where $\Pi_\Omega (x)$ denotes the projection of $x$ onto the set $\Omega$.
 When the   projection   is   more expensive than solving the
linear program, FW is faster than gradient descent.
}

\ignore{Standard FW algorithm (Algorithm~\ref{alg:fw}) \cite{frank1956algorithm} consists of making repeated calls to a linear optimization oracle
(line 6), followed by a convex averaging step of the current
iterate and the oracle’s output (line 7). It initializes a point $w_1$ in
the constraint set $\Omega$. Due to the convex combination step, we know that the iterate $w_t$ is always within the constraint set, which is why it is called projection-free, since a convex combination of two points in a convex set lies within the convex set. The learning rate $\gamma_t$ is usually set to $\frac{2}{t+2}$ (i.e. option (A)), but it can also be determined by line search like option (B). 

It is known that standard FW has $O(\frac{1}{t})$ guarantee on the convergence rate for smooth functions over arbitrary convex sets \cite{frank1956algorithm,H14}. 

We want to emphasize that the linear oracle over a constraint set $\Omega$ is typically much cheaper than the projection problem faced by gradient descent.
For example, the linear problem over the examples of the strongly convex set that we presented in the previous subsection all have closed-form solutions. The following has been proved by \cite{D15}, we replicate them for completeness.

\delete{
If $\Omega$ is a $L_p$ ball, then for $v \in \reals^d$ that satisfies
}
\begin{equation}
v_i = - \frac{r}{ \|c\|_q^{q-1}} \text{sgn}(c_i) |c_i|^{q-1},
\end{equation}
\delete{
is a solution of $v = \arg \min_{v \in \Omega} \langle v, c \rangle$,
where $q$ satisfies: $\frac{1}{p} + \frac{1}{q} = 1$ and \text{sgn} is the sign function.
}

\delete{
If $\Omega$ is a Schatten $L_p$ ball, 
let $C \in \reals^{m \times n}$ and $C=U\Sigma V^\top$ be the singular value decomposition of $C$. Let $\sigma$ be a vector such that 
}
\begin{equation}
\sigma_i = - \frac{r}{ 
\| \sigma(C) \|^{q-1}_{q}} \sigma(C)^{q-1}_{i},
\end{equation}
\delete{
where $q$ satisfies: $\frac{1}{p} + \frac{1}{q} = 1$.
Let 
}
\begin{equation}
v = U \text{Diag}(\sigma) V^\top
\end{equation} 
\delete{
where $\text{Diag}(\sigma) \in \reals^{m \times n}$ is an diagonal matrix 
with the vector $\sigma$ as the main diagonal.
Then, $v = \arg \min_{v \in \Omega} \langle v, C \rangle$,
}

\delete{
If $\Omega$ is a $L_{p,q}$ ball, then for $v \in \reals^{m \times n}$ that satisfies
}
\begin{equation}
v_{i,j} = - \frac{r}{ \|C\|_{z,s}^{s-1} \| C_i \|_s^{z-s}} \text{sgn}(C_{i,j}) |C_{i,j}|^{z-1},
\end{equation}
\delete{
where $z$ satisfies: $\frac{1}{p} + \frac{1}{z} = 1$,
$s$ satisfies: $\frac{1}{s} + \frac{1}{q} = 1$, and
$C_i$ denotes the $i$ row of $C$, is a solution of
$v = \arg \min_{v \in \Omega} \langle v, C \rangle$.
}

\delete{
For instance, it is known that for nuclear norm, the linear oracle 
is equivalent to computing the top eigenvector, which is $O(nnz)$ in complexity with $nnz$ representing the nonzero entries, while the projection back to the constraint set 
is cubic in the matrix size \cite{H14}.
}

\delete{
To explain further, the projection problem in (stochastic)-gradient descent is
}
\begin{equation}\label{proj}
w_t = \arg\min_{w \in \Omega} \| w - w_t' \|,
\end{equation}
\delete{
where $w_t' = w_{t-1} - \gamma f(w_{t-1})$ is an iterate by a descent step that must be projected back to the constraint set $\Omega$.
For many constraint sets $\Omega$, solving the problem~(\ref{proj}) is very costly. In comparison, solving the linear problem over the constraint is cheaper than (\ref{proj}).
That is, the lighter iteration cost of linear program is the merit of the FW algorithm.
}
}

 \section{Faster Convergence Rate for Smooth Convex Functions}
\label{sec:convex}
  
  \inv
\subsection{Primal Averaging (PA)}

PA \cite{L13} (Algorithm~\ref{alg:Nest2}) is a variant of FW that operates in a style similar to Nesterov's
acceleration method.    PA maintains three sequences,
$(z_{t-1})_{t=1,2,\dots}$, $(v_t)_{t=1,2,\dots}$, and $(w_{t})_{t=1,2,\dots}$.
The first is the accelerating sequence (as in Nesterov acceleration),
the second is the sequence of search directions, and the third is the sequence of solution
vectors.  At each iteration, PA updates its sequences by computing two
convex combinations and consulting the linear oracle, such that
\[ z_{t - 1} = (1 - \gamma_t) w_{t - 1} + \gamma_t v_{t - 1} \]
\[ v_t = \underset{v \in \Omega}{\argmin} \langle \Theta_t^{-1} \sum_{i = 1}^{t} \theta_i \nabla f(z_{i - 1}), v \rangle \]
\[ w_t = (1 - \gamma_t) w_{t - 1} + \gamma_t v_t \]
where $\Theta_t = \sum_{i = 1}^t \theta_i$ and the $\theta_i$ are chosen, such that $\gamma_t = \frac{\theta_t}{\Theta_t}$.
\fix{
    Note that choosing $\theta_t$ does not require significant computation as 
    setting $\theta_t = t$ satisfies the requirement $\gamma_t = \frac{\theta_t}{\Theta_t}$ for all $t$.
}
\footnote{
    \fix{
        If $\theta_t = t$ then
        $\frac{\theta_t}{\Theta_t} = \frac{t}{\sum_{i = 1}^t i} = \frac{2 t}{t (t + 1)} = \frac{2}{t + 1} = \gamma_t$.
    }
}

Since $z_{t - 1}$ and $w_t$ are convex combinations of elements of the constraint
set $\Omega$, $z_{t-1}$ and $w_t$ are themselves in $\Omega$.
While the input to the linear oracle is a single gradient vector in standard FW, PA uses 
an average of the gradients seen in iterations $1, 2, \dots, t$ as the input to the linear oracle.

%It maintains a weighted averaged gradient, which is the input to the linear oracle. 
In standard FW,  the sequence $( w_t )_{t = 1, 2, \dots}$   has 
the following property \cite{J13,L13,H14}:
\begin{equation} \label{eq:vws}
    f(w_t) - f(w^*) \leq \frac{2L}{t(t+1)} \Sigma_{i=1}^t \norm{v_i - w_{i - 1}}^2
\end{equation}
where $w^*$ is an optimal point and $L$ is the smoothness parameter of $f$.
We observe that the $\frac{1}{t} \sum_{i = 1}^t \norm{v_i - w_{i - 1}}$ factor of (\ref{eq:vws}) is the
average distance between the search direction and solution vector pairs.
Denote the diameter $D$ of $\Omega$ as $D = \underset{u, v \in \Omega}{\sup} \norm{u - v}$.
Then, since $w_{i - 1}$ and $v_i$ are both in $\Omega$, we find that $\frac{1}{t} \sum_{i = 1}^t \norm{v_i - w_{i - 1}} \leq D$.
That is, the average distance of $v_i$ and $w_{i - 1}$ is upper bounded by  
diameter $D$ of $\Omega$.  Combining this   with (\ref{eq:vws}) yields standard FW's 
convergence rate:
 \begin{equation} \label{eq:SFWConvRate}
    \begin{aligned}
        f(w_t) - f(w^*) &\leq \frac{2L}{t(t+1)} \Sigma_{i=1}^t \norm{v_i - w_{i - 1}}^2 \\
                        &\leq \frac{2LD^2}{t + 1} = O\left(\frac{1}{t}\right)
    \end{aligned}
\end{equation}
 PA has a similar guarantee for the sequence $( w_t )_{t = 1, 2, \dots}$~\cite{L13}.  Namely
\begin{equation} \label{eq:vvs}
    f(w_t) - f(w^*) \leq \frac{2L}{t(t+1)} \Sigma_{i=1}^t \| v_i - v_{i-1} \|^2 
\end{equation}
While the inability to guarantee an arbitrarily small distance between $v_i$ and $w_i$ in
Equation~\ref{eq:vws} caused standard FW to converge as $O( \frac{1}{t} )$, 
this is not the case for the distance between $v_i$ and $v_{i - 1}$ in Equation~\ref{eq:vvs}.
Should we be able to bound the distance $\norm{v_i - v_{i - 1}}$ to be arbitrarily small,
we can show that PA converges as $O( \frac{1}{t^2} )$ with high probability.  
We observe that the sequence $(v_t)_{t = 1, 2, \dots}$ expresses this behavior when
the constraint set is strongly convex. We have the following theorem.\footnote{All omitted proofs can be found in Appendix A.}

\begin{theorem} \label{thm:convex}
    Assume the \cameraReadyAdd{convex} function $f$ is smooth with parameter $L$. \fix{Further, define 
    the function $h$ as $h(w) = f(w) + \theta \xi^T w$ where 
    \cameraReadyAdd{$\theta \in \left(0, \frac{\epsilon}{4D} \right]$},
$\xi \in \reals^d$, $w \in \Omega$, $\Omega$ is an $\alpha$-strongly convex set,
\cameraReadyAdd{$D$ is the diameter of $\Omega$,}
and $\xi$ is uniform on the unit sphere.}  Applying 
PA to \fix{$h$} \delete{over $\Omega$ with 
dimensionality $d$} yields the following convergence rate \fix{for $f$} with probability $1 - \delta$,
\[f(w_t) - f(w^*) = O\left( \frac{d L}{\alpha^2 \delta^2 t^2 } \right)\]
\end{theorem}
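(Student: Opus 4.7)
My plan is to apply the Primal Averaging guarantee~(\ref{eq:vvs}) to the perturbed objective $h$ and then bound each term $\|v_i - v_{i-1}\|^2$ by combining the $\alpha$-strong convexity of $\Omega$ with a high-probability lower bound on $\|p_i\|$ supplied by the random perturbation $\theta\xi$. First, (\ref{eq:vvs}) applied to the $L$-smooth function $h$ gives $h(w_t) - h(w_h^*) \le \frac{2L}{t(t+1)}\sum_{i=1}^t \|v_i - v_{i-1}\|^2$, where $w_h^* \in \argmin_{w\in\Omega} h(w)$. Next, a standard consequence of Definition~\ref{def:strongConvSet} is the quadratic-growth inequality $\langle p, u - v(p)\rangle \ge \frac{\alpha\|p\|}{2}\|u - v(p)\|^2$ for every $u \in \Omega$, where $v(p) := \argmin_{v\in\Omega}\langle v, p\rangle$; summing this with the roles of $p$ and $q$ swapped and invoking Cauchy--Schwarz yields the Lipschitz-type bound
\[\|v(p) - v(q)\| \le \frac{2\,\|p-q\|}{\alpha\,(\|p\| + \|q\|)}\]
for nonzero $p,q$. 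Instantiating this at $p = p_i$, $q = p_{i-1}$ reduces the task to upper-bounding $\|p_i - p_{i-1}\|$ and lower-bounding $\|p_i\|$.

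The numerator is easy: because $\theta_t = t$ implies the recursion $p_i = \gamma_i \nabla h(z_{i-1}) + (1-\gamma_i)\,p_{i-1}$ with $\gamma_i = 2/(i+1)$, the triangle inequality yields $\|p_i - p_{i-1}\| \le 4G/(i+1)$, where $G := \sup_{w\in\Omega}\|\nabla h(w)\|$ is finite by $L$-smoothness and compactness of $\Omega$. The denominator is the critical piece: decomposing $p_i = \bar g_i + \theta\xi$ with $\bar g_i$ the analogous weighted average of $\nabla f(z_{j-1})$, the persistent additive direction $\theta\xi$ combined with the uniform distribution of $\xi$ on the unit sphere in $\reals^d$ should give, via an anti-concentration estimate for spherical caps (measure $\Theta(\varepsilon^{d-1})$ for a cap of angular width $\varepsilon$), a uniform lower bound $\|p_i\| \ge c\,\theta\,\delta/\sqrt d$ holding for all $i \le t$ with probability at least $1-\delta$. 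Chaining the three bounds yields $\|v_i - v_{i-1}\|^2 = O\!\bigl(d/(\alpha^2\theta^2\delta^2\, i^2)\bigr)$; summing and substituting into (\ref{eq:vvs}) gives $h(w_t) - h(w_h^*) = O\!\bigl(dL/(\alpha^2\theta^2\delta^2\, t^2)\bigr)$. Finally, since $|f(w) - h(w)| \le \theta D$ uniformly on $\Omega$, we have $f(w_t) - f(w^*) \le h(w_t) - h(w_h^*) + 2\theta D$, and fixing $\theta$ to a constant of order $\epsilon/D$ (as allowed by the hypothesis $\theta \le \epsilon/(4D)$) absorbs the $\theta$-dependence into the implicit constant and produces the claimed rate.

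The main obstacle is the uniform-in-$i$ lower bound $\|p_i\| \ge c\,\theta\,\delta/\sqrt d$. Because $\bar g_i$ depends adaptively on $\xi$ through the entire iterate history, one cannot simply condition on a fixed $\bar g_i$ and invoke anti-concentration of $\xi$ in one shot. The argument must exploit that the same $\xi$ enters every $p_i$ as the deterministic additive term $\theta\xi$, so in principle a single favorable event on $\xi$ should suffice; making this rigorous---most plausibly via a covering/net argument over the possible gradient trajectories, or by showing that the iterates $(z_i, \bar g_i)$ are Lipschitz in $\xi$ and then discretizing---is the step I expect to require the most care, and it is the source of both the $d$ factor and the $\delta^{-2}$ factor in the final bound.
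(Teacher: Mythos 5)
Your proposal follows essentially the same route as the paper's proof: Lan's guarantee~(\ref{eq:vvs}) applied to the perturbed objective $h$, the Lipschitz property of the linear oracle over $\alpha$-strongly convex sets (the paper's Lemma~\ref{lm:lip}, proved by exactly the swap-and-sum plus Cauchy--Schwarz argument you sketch), the recursion $p_i - p_{i-1} = \gamma_i(\nabla h(z_{i-1}) - p_{i-1})$ giving $\|v_i - v_{i-1}\| = O\bigl(\tfrac{1}{\alpha g i}\bigr)$, a high-probability lower bound on the averaged-gradient norm coming from the spherical perturbation, and the $h$-to-$f$ transfer via $|f(w)-h(w)|\le \theta\|w\|$ with $\theta = \tfrac{\epsilon}{4D}$. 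The step you single out as the main obstacle---a uniform-in-$i$ lower bound on $\|p_i\|$ despite $\bar g_i$ depending on $\xi$---receives no covering/net treatment in the paper either: its Lemma~\ref{lemma:probBounds} simply lower-bounds the first coordinate using $\xi[1]^2 \sim \mathrm{Beta}\bigl(\tfrac{1}{2}, \tfrac{d-1}{2}\bigr)$ and applies that pointwise bound to the minimal averaged-gradient norm $g$, so your plan matches the published argument and is, if anything, more candid about that gap.
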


\fix{Theorem~\ref{thm:convex} states that applying PA to a perturbed function $h$
over an $\alpha$-strongly convex constraint set
allows any smooth, convex function $f$ to converge as $O\left( \frac{1}{t^2}\right)$}
\delete{Theorem~\ref{thm:convex} states that PA has an $O\left( \frac{1}{t^2}\right)$ 
    convergence rate for smooth functions }
    with probability $1 - \delta$, albeit depending on   $\delta$ and $d$.  
    However, as $t$ grows, the $t^2$ term in the convergence rate's denominator quickly   dominates
    the rate's $\delta$ and $d$ terms. This, combined with PA's non-reliance on line 
    search, allows it to outperform the method proposed in~\cite{D15}.  
    \fix{
        We note that, although Theorem~\ref{thm:convex} requires us to run PA on 
        the perturbed function $h$, \emph{$f$ itself still converges as 
        $O\left( \frac{1}{t^2}\right)$ with high probability}.
        That is, the iterates $w_t$ produced by running PA on $h$ themselves 
        have the guarantee of 
        $f(w_t) - f(w^*) = O\left( \frac{d L}{\alpha^2 \delta^2 t^2 } \right)$
        for $w^* = \underset{w \in \Omega}{\argmin} f(w)$
        with probability $1 - \delta$.
    }
    We \fix{also} empirically investigate this \fix{result} in Section~\ref{sec:expr}.

\ignore{
    Theorem~\ref{thm:convex} states that PA has an $O\left( \frac{1}{t^2}\right)$ convergence rate for smooth functions, 
    albeit depending on the quantity $g$, which is the smallest norm of averaged gradients computed along the optimization process.
    Thus, we now address the relation between the location of optimal points and the condition that causes $g$ to be zero. 
    To continue, we use a known result that,
    \emph{the closure of the gradient space $\{ \nabla f(x): x \in \Omega \}$ is a convex set}~\cite{R98}.

    We know that $g$ is the norm of a convex combination of gradients at feasible points.
    Thus, when $g = 0$, we can write $0 \in \reals^d$ as a convex combination of gradients
    at feasible points.  Write the closure of the gradient space as ${\mathcal{W} = \{ \nabla f(x): x \in \Omega \}}$.
    Then, when $0$ can be written as a convex combination of the elements in 
    $\mathcal{W}$, it implies that $0$ is also in $\mathcal{W}$ ($\mathcal{W}$ is convex).
    So, if $0$ is not in the closure of $\{ \nabla f(x) : x \in \Omega \}$, then
    a convex combination of gradients would never be zero. 
    In order for $0$ to not be in the closure, the optimal points must not be in the interior of the feasible set $\Omega$.
    This seems like a strict assumption, as it assumes that the optimal points are on the boundary. However, if one believes that the optimal points lie inside $\Omega$, 
    one would not apply projected optimization algorithms to solve the problem---rather, they would simply treat it as an unconstrained optimization. 
    Hence, our assumption comes for free in situations where FW is typically applied.
    %\delete{Given this, we find the assumption to be mild, not strict, and share this concern with related works.}\cite{LP66,DR70,D79}\jarrid{Hurts our narrative}

    We note that $0 \not\in \mathcal{W}$ is a weaker assumption than
    those required by~\cite{D15}.  In particular,  while our assumption 
    typically comes for free in situations where FW is commonly used, the assumption 
    of~\cite{D15} (the loss function is strongly convex) does not (many convex functions are not strongly convex).
    %Therefore, we believe the assumption required by PA to be less restrictive than that of~\cite{D15}.

    Besides PA's assumption being weaker than that of~\cite{D15},
    it also holds another major advantage in that it does not require a line search
    (see Section~\ref{sec:related} for a discussion of why line search is computationally prohibitive for massive datasets).  
    Instead, PA only requires a predefined step size, allowing it to yield a much cheaper computational cost
    per iteration than the method in~\cite{D15}.

    %Furthermore, the condition that $g \ne 0$ is already weaker than the condition that the gradient is bounded everywhere on the feasible set.
    We note that the dependence of PA's convergence rate on $g$ can be lifted when
    $\| p_t\| \geq \| \nabla f(z_{t-1})\|$, though this may not hold for 
    every $t$.\footnote{$\| p_t\|$$\geq$$\| \nabla f(z_{t-1})\|$$\Rightarrow$$\frac{\| p_{t-1}\| + \| \nabla f(z_{t-1})\|}{\| p_t\| + \| p_{t-1}\|}$$\leq$$1$$\Rightarrow$$\| v_t$$-$$v_{t-1}\|$$\leq$$\frac{\gamma_t}{\alpha}$.
    }
    %Thus, it seems that Algorithm~{\ref{alg:Nest2}} with option (A) still suffer the issue as \cite{LP66,DR70,D79}.
    On the other hand, when the norm of averaged gradients is large (this generally occurs when the optimum is on the boundary), 
    PA converges faster as the denominator in the convergence rate becomes large. 
}

\ignore{
\jarrid{I'd like to remove from here to 4.1.  We don't test with this method (we only test w/ option A) and removing it brings us within the page limit.}
We now prove the claimed convergence rate for PA with option (B).
}

\ignore{
\jarrid{Delete this}
\begin{theorem} 
    \label{thm:convex2}
Assume the function $f$ is smooth with parameter $L$. Applying Algorithm~\ref{alg:Nest2} with option (B) to $f$ over an $\alpha$-strongly convex 
 set $\Omega$ has the following convergence rate.
\[
f(w_t) - f^* = O\left( \frac{L^3 D^2}{\alpha^2 h^2 t^2 }\right)
\]
where $h = \min\{\| \nabla f(z_{t}) \| + \| \nabla f(z_{t-1}) \| \}$ and $D$ is the diameter of $\Omega$.
\end{theorem}
}

\ignore{
\delete{
Whether option (A) is better than (B) depends on whether or not $\frac{1}{h}$ is smaller than $\frac{LD^2}{g}$. Alternatively, if $h$ is larger $g$, then option $(A)$ is generally better. 
}
}

%\begin{comment}
%The $O(1/t^2)$ convergence rate here depends on the quantity $h$, which is the sum of two gradient norms and is weaker than that the gradient is bounded everywhere. 
%Note that the condition that $h=0$ is weaker than that of $g=0$.
%If a gradient at a feasible point is zero, then the feasible point is a global minimum.
%This suggests that the algorithm should stop, so 
%even if $0 \in \{ \nabla f(x)| x \in \Omega \}$, the sum would not be zero before reaching a global minimum. Finally, 
%\end{comment}

\subsection{Stochastic Primal Averaging (SPA)}
\label{sec:spa}

Here we provide a stochastic version of Primal Averaging.  While in the previous section we studied
PA with Option (A) of Algorithm~\ref{alg:Nest2}, we now consider PA with
Option (B) of Algorithm~\ref{alg:Nest2}, providing an analysis of its stochastic version.
That is, $p_t= \tilde{\nabla} f(z_{t-1})$, where 
\fix{
    $\tilde{\nabla} f$ represents the aggregated stochastic gradient constructed as 
    $\tilde{\nabla} f(z_{t - 1}) = \sum_{i \in S_t} \hat{\nabla} f_i (z_{t - 1})$. Further,
    $\hat{\nabla} f_i (\cdot)$ is the stochastic gradient computed 
    with the $i$th item of a dataset of size $N$, while $S_t$ is the set of 
    indices sampled without replacement from $\{1,2,\dots,N\}$ at 
    iteration $t$.  We note that $\left|S_t\right| = \min(t^4, N)$.
}
\begin{theorem}
\label{thm:spa}
\cameraReadyAdd{Assume the convex function $f$ is smooth with parameter $L$.}
Denote $\sigma$ as the variance of a stochastic gradient. Suppose 
$p_t = \tilde{\nabla} f( z_{t-1} )$ and the number of samples used to obtain $p_t$ is $n_t=O(t^4)$.
\fix{Further, define 
the function $h$ as $h(w) = f(w) + \theta \xi^T w$ where 
\cameraReadyAdd{$\theta \in \left(0, \frac{\epsilon}{4D} \right]$},
$\xi \in \reals^d$, $w \in \Omega$, $\Omega$ is an $\alpha$-strongly convex set,
\cameraReadyAdd{$D$ is the diameter of $\Omega$,}
and $\xi$ is uniform on the unit sphere.}
\delete{Then running PA on $h$ yields with probability $1 - \delta$ the convergence rate for $f$}
\fix{
    Then applying PA to $h$ yields the following convergence rate for $f$ 
    with probability $1 - \delta$,
}
$$E[ f(w_t) ] - f(w^*) = O\left( \frac{ d L^2 (D^2 + \sigma) \log t }{\alpha^2 \delta^2 t^2} \right)$$
%$$ E[f(w_t)] - f(w^*) =    O( \frac{L^2 (D^2 + \sigma) \log t}{\alpha^2 h^2 t^2 })$$
\end{theorem}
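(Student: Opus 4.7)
The plan is to mirror the proof of Theorem~\ref{thm:convex}, but to carefully track the effect of replacing $\nabla f(z_{i-1})$ by the aggregated stochastic gradient $p_i = \tilde{\nabla} f(z_{i-1})$. Writing $\epsilon_i = p_i - \nabla f(z_{i-1})$ for the noise, a smoothness-based unrolling of the PA iterates exactly as in~\cite{L13} (but with $p_i$ in place of $\nabla f(z_{i-1})$) yields a stochastic descent inequality of the form
\[
f(w_t) - f(w^*) \;\leq\; \frac{2L}{t(t+1)} \sum_{i=1}^t \| v_i - v_{i-1} \|^2 \;+\; \frac{1}{t(t+1)} \sum_{i=1}^t i\, \langle \epsilon_i,\, v_i - w^* \rangle.
\]
Taking expectation over the stochastic gradient draws (conditional on the perturbation $\xi$) kills the second sum in expectation up to a variance contribution of order $\sum_i i^2\, \sigma^2 / n_i$, which, with the choice $n_i = \Theta(i^4)$, collapses to a constant.

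Next I would bound $\|v_i - v_{i-1}\|$ using the strong convexity of $\Omega$: the standard lemma (used in~\cite{D15}) says that if $v$ and $v'$ are linear minimizers of $p$ and $p'$ over an $\alpha$-strongly convex $\Omega$, then
\[
\| v - v' \| \;\leq\; \frac{2\, \| p - p' \|}{\alpha\,(\| p \| + \| p' \|)}.
\]
I would split $p_i - p_{i-1} = \bigl(\nabla f(z_{i-1}) - \nabla f(z_{i-2})\bigr) + (\epsilon_i - \epsilon_{i-1})$: the deterministic part is at most $L\|z_{i-1} - z_{i-2}\| = O(LD/i)$ by smoothness and the PA update rule, while the stochastic part has expected squared norm $O(\sigma^2/n_i) = O(\sigma^2/i^4)$. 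Summing $\|v_i - v_{i-1}\|^2$ from $i=1$ to $t$ therefore contributes $O\bigl((L^2 D^2 \log t + \sigma^2) / (\alpha^2 g^2)\bigr)$, where $g$ is a uniform lower bound on $\|p_i\| + \|p_{i-1}\|$. Substituting into the descent inequality gives the desired $\log t / t^2$ scaling, modulo the factor $1/g^2$.

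The last step is to obtain such a lower bound $g$ with probability $1 - \delta$ over $\xi$, exactly as in Theorem~\ref{thm:convex}. Because $\nabla h(w) = \nabla f(w) + \theta \xi$ and $\xi$ is uniform on the unit sphere, the projection of $\xi$ onto any fixed direction has density bounded by $O(\sqrt{d})$ near the origin; this gives an anti-concentration bound showing that $\|\nabla h(w)\| \geq \theta\, \delta / \sqrt{d}$ uniformly over the iterates with probability at least $1 - \delta$. (Since there are at most $t$ iterates, one applies either a union bound over a finite net or, preferably, an argument that reduces to a single direction uniformly, at the cost of absorbed constants.) With $g = \Omega(\theta \delta / \sqrt{d})$ and $\theta = \Theta(\epsilon / D)$ from the theorem's hypothesis, the numerator $L^2 D^2 \log t + \sigma^2$ from the previous step becomes $O\bigl(d L^2 (D^2 + \sigma^2) \log t / (\alpha^2 \delta^2)\bigr)$ after dividing by $\alpha^2 g^2$, matching the claimed bound.

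The main obstacle will be coupling the two sources of randomness cleanly: the denominator $\|p_i\| + \|p_{i-1}\|$ in the strong-convexity bound is itself a random quantity (through both $\xi$ and the stochastic gradients), so one must condition on the high-probability event determined by $\xi$ and only then take expectation over the gradient noise, being careful that the sample-size schedule $n_i = \Theta(i^4)$ is tight enough to prevent the stochastic contribution from inflating the sum beyond a $\log t$ factor.
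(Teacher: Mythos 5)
Your overall skeleton is the same as the paper's: the smoothness-plus-convexity recursion for PA with the stochastic gradient fed to the oracle, the strong-convexity lemma to bound $\|v_i-v_{i-1}\|$ by $\|p_i-p_{i-1}\|/\bigl(\alpha(\|p_i\|+\|p_{i-1}\|)\bigr)$, the schedule $n_i=\Theta(i^4)$, and finally the perturbation $h(w)=f(w)+\theta\xi^{\top}w$ with the Beta/anti-concentration bound to lower-bound the gradient norms with probability $1-\delta$ (conditioning on $\xi$ first and then taking expectation over the gradient noise, exactly as you suggest). The gap is in your treatment of the noise cross term. You claim that taking expectation ``kills'' $\sum_i i\,\langle \epsilon_i, v_i - w^*\rangle$ up to a variance contribution of order $\sum_i i^2\sigma^2/n_i$. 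But $v_i$ is the oracle output computed from the noisy input $p_i=\nabla f(z_{i-1})+\epsilon_i$, so $\langle\epsilon_i, v_i-w^*\rangle$ is \emph{not} conditionally mean-zero, and a ``variance'' accounting is beside the point for an in-expectation bound. The paper handles this term by Cauchy--Schwarz: $E\langle \nabla f(z_{i-1})-\tilde{\nabla}f(z_{i-1}),\,v_i-w^*\rangle \le E\|\epsilon_i\|\,D \le \sigma D/\sqrt{n_i}$, which is first order in the noise ($\sim 1/i^2$), not second order ($\sim 1/i^4$); with the weight $i$ this sums to $O(\log t)$, and \emph{this} is precisely where the $\log t$ in the theorem comes from when $n_i=O(i^4)$ (with $n_i=O(i^6)$ it would disappear).

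Your final expression nevertheless has the right form only because of a compensating looseness elsewhere: you attribute a $\log t$ to $\sum_i (LD/i)^2$ inside the $\|v_i-v_{i-1}\|^2$ sum, but that series is $O(1)$, so as written your accounting both understates the cross term (constant instead of $\log t$) and overstates the drift term ($\log t$ instead of constant). The coupling problem you flag in your last paragraph as ``the main obstacle'' is exactly the unresolved point, and it has a one-line fix: bound the bias by Cauchy--Schwarz as the paper does. (Alternatively, one could rigorously realize your martingale picture by decoupling through the oracle output on the noiseless gradient, bounding $\langle\epsilon_i, v_i-v_i'\rangle$ via the strong-convexity lemma by a term quadratic in $\|\epsilon_i\|$; that is a genuinely different and somewhat sharper route, but you would need to carry it out rather than assert it.)
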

Theorem~\ref{thm:spa} states that the stochastic version of PA maintains an $O\left(\frac{\log t}{t^2}\right)$ 
convergence rate with high probability\fix{, using $h$ in a manner similar to Theorem~\ref{thm:convex}}.
Note that $n_t$ grows as $O(t^4)$ until it begins to use all the data points to compute the gradient. 
 Thus, for earlier iterations of SPA, the algorithm requires far less computation
than its deterministic counterpart.  However, the   samples required in each iteration grows quickly,
causing  later iterations of SPA to share the same computational cost as deterministic Primal
Averaging.
 
\section{Strictly-Locally-Quasi-Convex Functions}
\label{sec:quasi}

In this section we show that FW with line search can converge within an $\epsilon$-neighborhood of 
the global minimum for strictly-locally-quasi-convex functions.  Furthermore, \fix{if} it is assumed that the 
norm of the gradient is lower bounded, then FW with line search can converge within an $\epsilon$-neighborhood
of the global minimum in $O\left( max\left( \frac{1}{\epsilon^2}, \frac{1}{\epsilon^3}\right) \right)$ iterations.

\begin{theorem} \label{thm:quasi}
Assume that the function $f$ is smooth with parameter $L$, and that 
$f$ is $(\epsilon,\kappa,w^*)$-strictly-locally-quasi-convex, where $w^*$ is a global minimum.
Then, the standard FW algorithm with line search (Algorithm~\ref{alg:fw} option (B)) 
can converge within an $\epsilon$-neighborhood of the global minimum when the constraint set 
is strongly convex.
Furthermore, if one assumes that $f(w) - f(w^*) \geq \epsilon$ implies that the norm of the gradient is lower bounded
\fix{as $\| \nabla f (w) \| \geq \theta \epsilon$ {} for some $\theta \in \reals$}, 
then the algorithm needs $t= O(max( \frac{\fix{2 \kappa}}{\fix{\theta} \epsilon^2}, \frac{\fix{8 L \kappa}}{\fix{\theta} \epsilon^3}) )$ iterations 
to produce an iterate that is within an $\epsilon-$neighborhood of the global minimum.
\end{theorem}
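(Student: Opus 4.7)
The plan is to argue dichotomously at each iterate $w_t$. By the definition of strictly-locally-quasi-convex, either $f(w_t) - f(w^*) \leq \epsilon$ and we are already inside the desired $\epsilon$-neighborhood, or we land in the alternative regime where $\|\nabla f(w_t)\| > 0$ and $\langle \nabla f(w_t), y - w_t\rangle \leq 0$ for every $y \in \mathds{B}_{\epsilon/\kappa}(w^*)$. All of the work goes into showing that the second regime produces a concrete per-iteration descent.

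The first substantive step is to convert the SLQC inequality into a lower bound on the Frank-Wolfe correlation $g_t := \langle \nabla f(w_t), w_t - v_t\rangle$. Rather than plugging in the tempting $y = w^*$ (which yields only the trivial $g_t \geq 0$), I would pick the adversarial test vector $y = w^* + (\epsilon/\kappa)\,\nabla f(w_t)/\|\nabla f(w_t)\|$, which lies in $\mathds{B}_{\epsilon/\kappa}(w^*)$ and maximizes $\langle \nabla f(w_t), y\rangle$ over that ball. SLQC then yields $\langle \nabla f(w_t), w_t - w^*\rangle \geq (\epsilon/\kappa)\|\nabla f(w_t)\|$, and since $v_t$ is the linear minimizer over $\Omega$ and $w^* \in \Omega$, optimality of $v_t$ upgrades this to $g_t \geq (\epsilon/\kappa)\|\nabla f(w_t)\|$. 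Under the gradient lower bound assumption $\|\nabla f(w_t)\| \geq \theta\epsilon$, this becomes $g_t \geq \theta\epsilon^2/\kappa$.

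Next I would invoke the strongly-convex-set lemma underlying~\cite{D15}, which gives $\|v_t - w_t\|^2 \leq 2 g_t / (\alpha\|\nabla f(w_t)\|)$. Combining smoothness with the line search of option (B), which can only improve over evaluating the smoothness upper bound at any fixed step size, and plugging in $\gamma = \min(1,\, g_t/(L\|v_t - w_t\|^2))$, yields the familiar two-regime descent
\[
    f(w_t) - f(w_{t+1}) \;\geq\; \min\!\left(\tfrac{g_t}{2},\; \tfrac{g_t^2}{2L\|v_t - w_t\|^2}\right) \;\geq\; \min\!\left(\tfrac{g_t}{2},\; \tfrac{\alpha g_t \|\nabla f(w_t)\|}{4L}\right).
\]
Substituting the lower bounds on $g_t$ and $\|\nabla f(w_t)\|$ produces a uniform per-step decrease of order $\min(\epsilon^2/\kappa,\, \epsilon^3/(L\kappa))$ (absorbing $\alpha$ and $\theta$ constants), so dividing the initial gap $f(w_0) - f(w^*)$ by this decrease yields the stated iteration complexity $t = O(\max(\kappa/(\theta\epsilon^2),\, L\kappa/(\theta\epsilon^3)))$. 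For the qualitative first part of the theorem, the same machinery shows that the decrease is strictly positive whenever we are outside the $\epsilon$-neighborhood; since $f$ is bounded below and the iterates lie in the compact set $\Omega$, continuity of $\nabla f$ rules out a limit point with positive gap, so the iterates must eventually enter the $\epsilon$-neighborhood.

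The main obstacle is the choice of test vector $y$ in the SLQC condition: the natural choice $y = w^*$ destroys the $\|\nabla f(w_t)\|$ factor that drives the whole rate, so one must deliberately move outward along $+\nabla f(w_t)$ to make SLQC bite. A secondary bookkeeping point is that the two regimes of the line-search optimum ($\gamma^\star = 1$ versus $\gamma^\star < 1$) have to be tracked separately, and it is precisely this split that produces the two branches inside the $\max$ in the final iteration count.
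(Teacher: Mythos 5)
Your proposal is correct and follows essentially the same route as the paper's proof: the same adversarial choice $y = w^* + \frac{\epsilon \nabla f(w_t)}{\kappa \|\nabla f(w_t)\|}$ in the SLQC condition, the same strongly-convex-set descent machinery (which the paper imports as a lemma from~\cite{D15} and you rederive via the FW gap and the bound on $\|v_t - w_t\|^2$), and the same two-regime step-size split with telescoping. The only discrepancies are in constants (e.g., under Definition~\ref{def:strongConvSet} the midpoint argument gives $\|v_t - w_t\|^2 \leq 4 g_t / (\alpha \|\nabla f(w_t)\|_{\ast})$ rather than your factor $2$), which do not affect the claimed $O\left(\max\left(\frac{\kappa}{\theta \epsilon^2}, \frac{L \kappa}{\theta \epsilon^3}\right)\right)$ iteration bound.
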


\fix{
    Hazan et al.~\cite{HLS15} provide several examples
    of strictly-locally-quasi-convex functions. 
    First, if $\epsilon \in (0, 1]$ and $x = (x_1, x_2) \in [-10, 10]^2$, then the function 
    \[
        g(x) = (1 + e^{-x_1})^{-1} + (1 + e^{-x_2})^{-1}
    \]
    is $(\epsilon, 1, x^*)$-strictly-locally-quasi-convex in $x$.
    Second, if $\epsilon \in (0, 1)$ and $w \in \reals^d$, then the function 
    \[
        h(w) = \frac{1}{m} \sum_{i = 1}^m (y_i - \phi (\langle w, x_i \rangle))^2 
    \]
    is $(\epsilon, \frac{2}{\gamma}, w^*)$-strictly-locally-quasi-convex in $w$.  Here,
    $\phi(z) = \mathds{1}_{z \geq 0}$, 
    $\gamma \in \reals$ is the margin of a perceptron, and we have $m$ samples
    $\{ (x_i, y_i) \}_{i = 1}^m \in \mathds{B}_1 (0) \times \{0, 1\}$ where 
    $\mathds{B}_1 (0) \subset \reals^d$.
}

\delete{
Hazan et al.~\cite{HLS15} provide an example 
of a  strictly-locally-quasi-convex loss in practice: 
%proposing a generalized linear model:
 their
model is \delete{$(\epsilon, \exp(W), w^*)$-}strictly-locally-quasi-convex in $w$ for $\epsilon, W \in \reals$, 
$\epsilon > 0$, and $w \in \mathds{B}_W(0)$, where $\exp(\cdot)$ is the exponential
function.  Further, denote the sigmoid function 
$\phi: \reals \rightarrow \reals$ as ${\phi(z)= (1+\exp(-z))^{-1}}$.  Then the generalized
linear model with the sigmoid loss is given by \fix{the smooth function} 
$f(w) = \frac{1}{m} \sum^{m}_{i=1} ( y_i - \phi( \langle w, x \rangle))^2$
where there exists a $w^*$ such that ${y_i = \phi( \langle w^*, x_i \rangle) }$, $\forall i$ and ${\|w^*\|\leq W}$. 
\delete{
    The function $f$ is smooth and it can further be shown that the assumption 
    $f(w) - f(w^*) \geq \epsilon \implies \| \nabla f(w) \| \geq \theta \epsilon$ 
    holds for ${\theta = \frac{3 \exp(-W)}{4 W }}$.
}
}

\begin{table*}[!htbp]
\centering
\scalebox{0.8}{
\begin{tabular}{|c|c|c|c|}
\hline
\textbf{Convexity of Loss Function} & \textbf{Loss Function} & \textbf{Constraint} & \textbf{Task}  \\ \hline
\multirow{2}{*}{Convex} 
& Quadratic Loss & $l_p$ norm & Regression \\ \cline{2-4}
 & Observed Quadratic Loss & Schatten-$p$ norm  & Matrix Completion \\ \hline
Strictly-Locally-Quasi-Convex & Squared Sigmoid & $l_p$ norm & Classification \\ \hline
Non-Convex & Bi-Weight Loss & $l_p$ norm & Robust Regression\\ \hline
\end{tabular}
}
\caption{Various loss  functions and constraint sets used in our experiments.}
\label{tbl:obj_func}
\end{table*}
 
\begin{figure*}[!ht]
\inv
\centering     %%% not \center
\begin{subfigure}[t]{0.35\linewidth}
\centering
    \includegraphics[height=50mm, keepaspectratio]{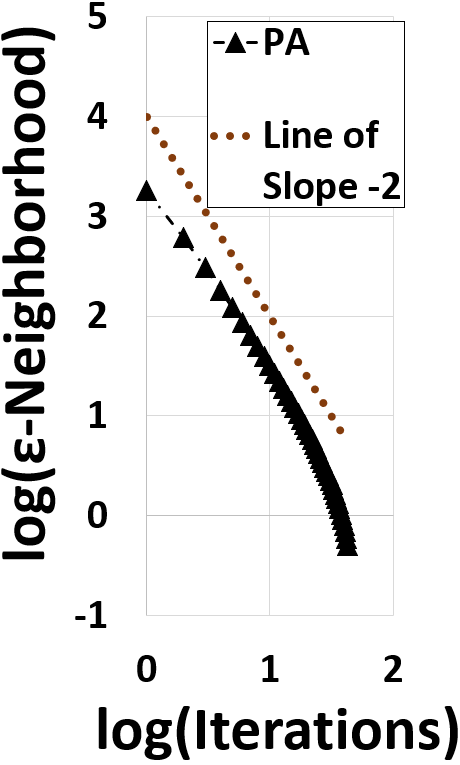}
    \centering
    \caption{Matrix completion w/ convex (observed quadratic) loss, Schatten-$2$ norm constraint.}
    \label{fig:MtxCmpPAConvRate}
\end{subfigure}
\hspace*{0.3cm}
\begin{subfigure}[t]{0.23\linewidth}
\centering
    \includegraphics[height=50mm, keepaspectratio]{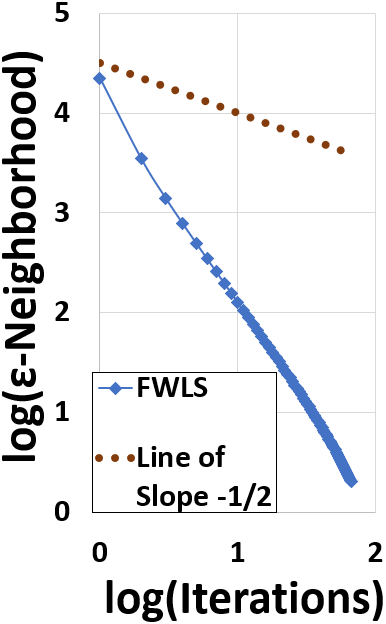}
    \caption{Classification w/ quasi-convex (squared sigmoid) loss, $l_2$ norm constraint.} 
    \label{fig:L2PAConvRate}
\end{subfigure}
\hspace{0.3cm}
\begin{subfigure}[t]{0.3\linewidth}
\centering
    \includegraphics[height=50mm, keepaspectratio]{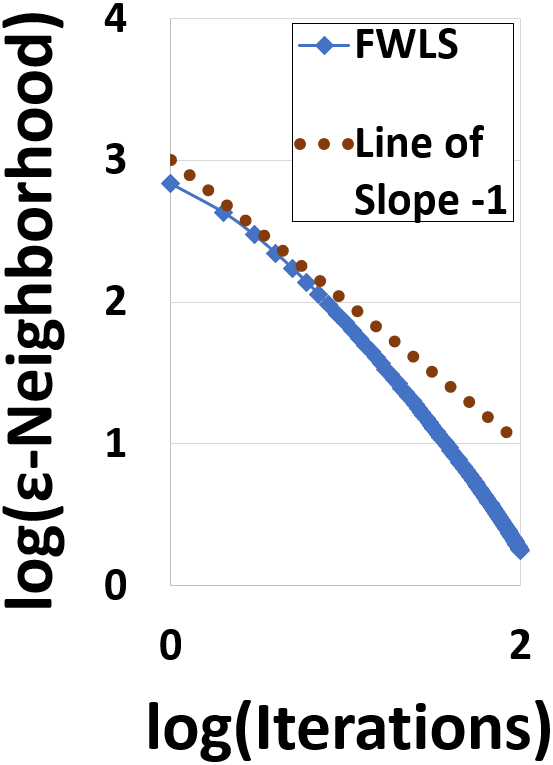}
    \centering
    \caption{Regression w/ non-convex (bi-weight) loss, $l_2$ norm constraint.}
    \label{fig:FWLSNonConvex}
\end{subfigure}
\inv
\caption{Convergence rates of FW variants for convex loss without line search and non-convex loss with line search.} 
\label{fig:ConvRates}
\end{figure*}

\section{Smooth Non-Convex Functions}
 \label{sec:nonconvex}

In this section, we show that, \fix{with high probability}, FW with line search
converges as $O\left( \frac{1}{t} \right)$ to a stationary point when
the loss function is non-convex and the constraint set is strongly convex.
To our knowledge, a rate this rapid does not exist in the non-convex optimization literature.

To help demonstrate our theoretical guarantee, we introduce a measure called the
FW gap. The FW gap of $f$ at a point $w_t \in \Omega$ is defined as $k_t := \max_{ v \in \Omega } \langle v - w_t, - \nabla f( w_t) \rangle$.
This measure is adopted in~\cite{S16}, which is the first work to show that, for smooth 
non-convex functions, FW has an $O\left( \frac{1}{\sqrt{t}} \right)$ convergence 
rate to a stationary point over arbitrary convex sets. The $O\left( \frac{1}{\sqrt{t}} \right)$ rate matches 
the rate of projected gradient descent when the loss function is smooth and non-convex. 
It has been shown~\cite{S16} that a point $w_t$ is a stationary 
point for the constrained optimization problem if and only if $k_t = 0$.
%The measure is computed for free for the standard FW method.

\begin{theorem}
\label{thm:nonconvex}
Assume that the \cameraReadyAdd{non-convex} function $f$ is smooth with parameter $L$ and the constraint set $\Omega$ is $\alpha$-strongly convex and has dimensionality $d$. 
Further, define 
the function $h$ as $h(w) = f(w) + \theta \xi^T w$ where 
\cameraReadyAdd{$\theta \in \left(0, \frac{\epsilon}{4D} \right] $},
$\xi \in \reals^d$, $w \in \Omega$, 
\cameraReadyAdd{$D$ is the diameter of $\Omega$,}
and $\xi$ is uniform on the unit sphere. Let 
$\ell_1 = f(w_1) - f(w^\ast)$ and $C' = \frac{ \alpha \delta \sqrt{\pi}}{8 L \sqrt{2 d}}$. Then
\fix{applying} FW with line search \fix{to $h$ yields the following guarantee for the FW gap of $f$} with probability $1 - \delta$,
\[
    \underset{ 1 \leq s \leq t  }{\min}  k_s \leq \frac{ \ell_1  }{ t \min \{ \frac{1}{2} ,  C' \} } = O\left( \frac{1}{t} \right)
\]
 \end{theorem}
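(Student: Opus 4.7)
The plan is to run the standard Frank-Wolfe descent argument on the perturbed objective $h$, exploit strong convexity of $\Omega$ to translate the per-step decrease into a statement about the FW gap of $h$, and then transfer the guarantee back to $f$ using the facts that $\theta$ is small and $\xi$ is random.

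First I would note that $h$ is $L$-smooth, since the perturbation is linear and hence $0$-smooth. Applying smoothness to an FW step with direction $v_t$ and step size $\gamma_t$ gives
\[
h(w_{t+1}) \leq h(w_t) - \gamma_t k_t^h + \tfrac{L\gamma_t^2}{2}\|v_t - w_t\|^2,
\]
where $k_t^h := \langle v_t - w_t, -\nabla h(w_t)\rangle$ is the FW gap of $h$ at $w_t$. The key geometric lever, already used in \cite{D15}, is that over an $\alpha$-strongly convex set $\|v_t - w_t\|^2 \leq \frac{2 k_t^h}{\alpha\|\nabla h(w_t)\|}$. Substituting this and minimizing over $\gamma_t \in [0,1]$ (line search does at least as well as any fixed choice) yields the per-step decrease
\[
h(w_{t+1}) - h(w_t) \leq -k_t^h \min\!\left\{\tfrac{1}{2},\ \tfrac{\alpha\|\nabla h(w_t)\|}{4L}\right\}.
\]

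The main obstacle is the $\|\nabla h(w_t)\|$ factor: unlike the convex case we cannot assume the gradient is bounded below, and the iterate $w_t$ itself depends on $\xi$. Here the random perturbation rescues us. From $\|\nabla h(w)\|^2 = \|\nabla f(w)\|^2 + 2\theta\langle\nabla f(w),\xi\rangle + \theta^2$, completing the square yields the geometric identity $\|\nabla h(w)\| \geq \theta\,|\sin\angle(\nabla f(w),\xi)|$, so the gradient of $h$ is only small when $\xi$ is nearly anti-parallel to $\nabla f(w)$. Because $\xi$ is uniform on $S^{d-1}$, a cap-measure estimate gives $\Pr(|\sin\angle(\xi,u)| \leq s) \leq s\sqrt{2d/\pi}$ for any fixed unit vector $u$. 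Combined with a careful handling of the $\xi$-dependence of the trajectory (using that $\xi$ enters $h$ only linearly, so the ``bad direction'' set for each iterate is controllable), this would give, with probability $\geq 1-\delta$, the uniform bound $\frac{\alpha\|\nabla h(w_s)\|}{4L} \geq C'$ for all $s \leq t$. I expect this step, and in particular the union-bound argument over the $\xi$-dependent trajectory, to be the most delicate piece of the proof.

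Once the gradient lower bound is in hand the rest is bookkeeping. Summing the per-step decrease over $s=1,\ldots,t$ and using $\min_s k_s^h \leq \frac{1}{t}\sum_s k_s^h$ gives $\min_s k_s^h \leq \frac{h(w_1)-h(w^\ast)}{t\min\{1/2,\,C'\}}$. Because the linear perturbation shifts $h$ from $f$ by at most $\theta D$ over $\Omega$, we have $h(w_1)-h(w^\ast) \leq \ell_1 + O(\theta D)$. Finally, expanding the max that defines the FW gap gives $|k_s - k_s^h| \leq \theta D$, and the assumption $\theta \leq \epsilon/(4D)$ absorbs this discrepancy into the target $O(1/t)$ rate, yielding the claimed $\min_{1\leq s\leq t} k_s \leq \frac{\ell_1}{t\min\{1/2,\,C'\}}$.
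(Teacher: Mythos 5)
Your proposal follows essentially the same route as the paper's proof: an $L$-smoothness descent step combined with the Garber--Hazan strongly-convex-set geometry to obtain a per-iteration decrease of $k_t\min\{\tfrac{1}{2},\,\tfrac{\alpha\|\nabla h(w_t)\|}{8L}\}$ (your intermediate bound $\|v_t-w_t\|^2\le \frac{2k_t}{\alpha\|\nabla h(w_t)\|}$ should carry constant $4$ rather than $2$, which only shifts constants), then telescoping, and finally a high-probability lower bound on $\|\nabla h\|$ from the random linear perturbation, which the paper obtains via the first coordinate and a Beta-distribution argument rather than your sine/cap estimate, to the same effect. The two loose ends you flag --- uniformity of the gradient lower bound along the $\xi$-dependent trajectory, and the $O(\theta D)$ slack in transferring the FW gap and initial suboptimality from $h$ back to $f$ --- are equally unresolved in the paper's own proof, which runs the descent analysis directly on $f$ and simply invokes ``repeating the analysis of Lemma~\ref{lemma:probBounds}'', so your handling of the $h$-to-$f$ transfer is, if anything, more explicit.
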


We would further discuss the result stated in the theorem. In non-convex optimization literature, 
Nesterov and Polyak \cite{N06} show that cubic regularization of Newton's method can find a 
stationary point in $O( \epsilon^{-3/2})$ iterations and evaluations of the Hessian.
First order methods, such as gradient descent, 
typically require $O( \epsilon^{-2})$ iterations \cite{GDHS17} to converge to a stationary point.
Recent progress on first order methods, however, assumes some mild conditions and show  
that an improved rate of $O( \epsilon^{-7/4})$ is possible~\cite{GDHS17,AABHM17}.
Here, we show that when the constraint set is strongly convex, FW with line search only needs 
$O( \epsilon^{-1})$ iterations to arrive within an $\epsilon$-neighborhood of a stationary point.  
It is important to note, although the $O( \epsilon^{-1} )$ convergence rate holds probabilistically, 
	it is   quite fast compared to the known rates in the non-convex optimization literature.

\begin{figure*}[!htbp]
\centering
\begin{subfigure}[t]{0.32\linewidth}
\centering
    \includegraphics[width=1\linewidth, keepaspectratio]{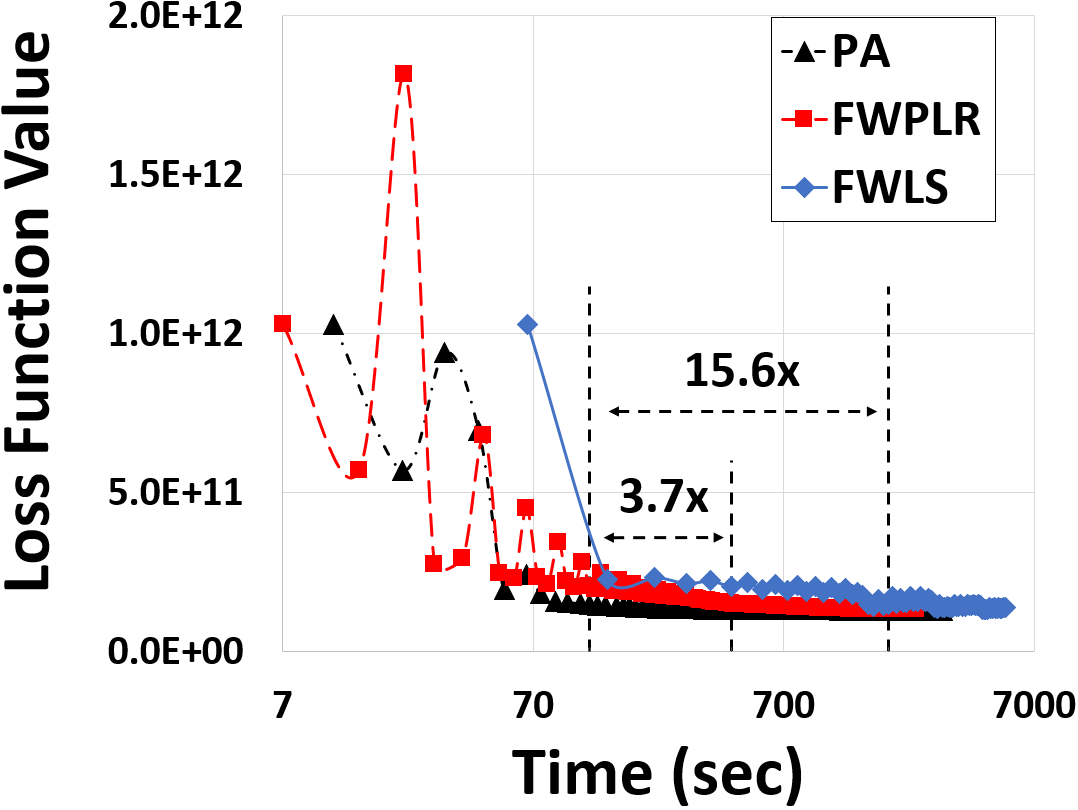}
    \centering
    \caption{PA vs. standard FW variants.}
    \label{fig:L2Time}
\end{subfigure}
\hspace{0.1cm}
\begin{subfigure}[t]{0.29\linewidth}
    \centering
    \includegraphics[width=1\linewidth, keepaspectratio]{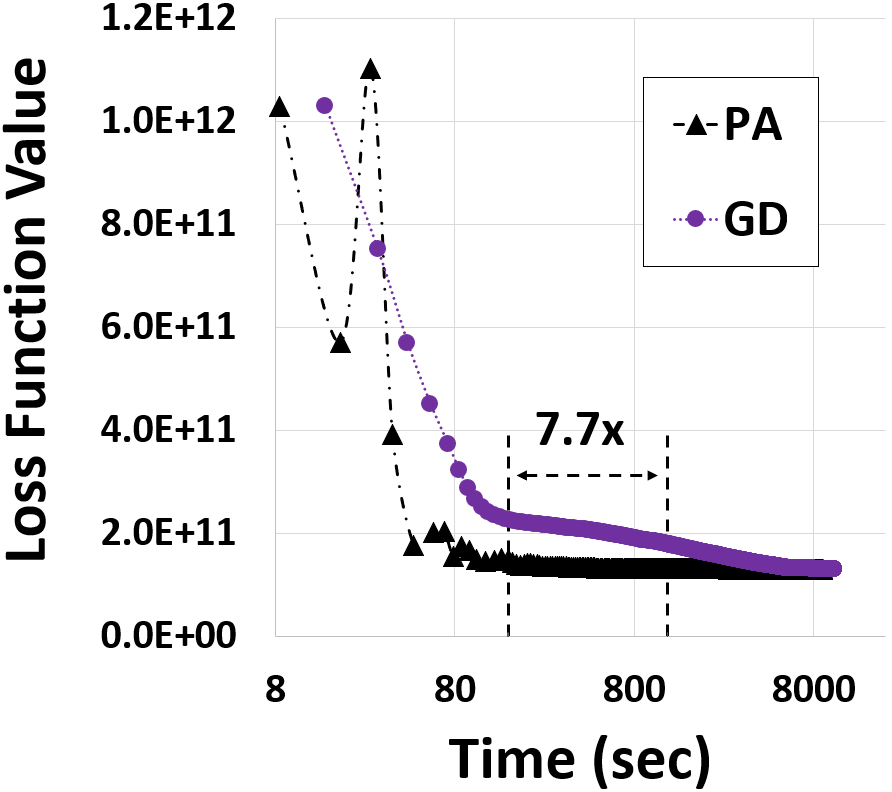}
    \caption{PA vs. gradient descent.}
    \label{fig:DetSPAVsSGD}
\end{subfigure}
\hspace{0.1cm}
\begin{subfigure}[t]{0.30\linewidth}
    \centering
    \includegraphics[width=1\linewidth, keepaspectratio]{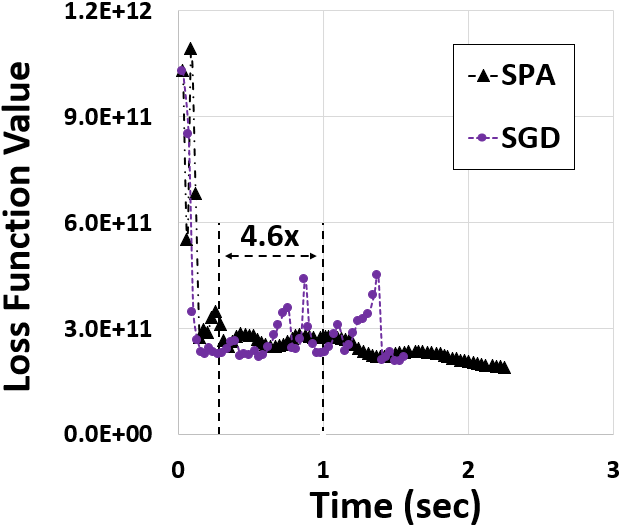}
    \caption{Stochastic PA vs. stochastic GD.}
    \label{fig:SGDGtSPA}
\end{subfigure}
\caption{PA versus (a) other FW variants, (b) gradient descent, and (c) stochastic gradient descent.}
\inv
\label{fig:times}
\end{figure*}
\section{Experiments}
\label{sec:expr}
\sinv

We have conducted extensive experiments on different combinations of loss functions, constraint sets, and real-life datasets
(Table~\ref{tbl:obj_func}). Here, we only report two main sets of experiments: the empirical validation of our theoretical results in terms of 
convergence rates (Section~\ref{sec:expr:theory}) and the comparison of various optimizations in terms of actual run times
  (Section~\ref{sec:expr:times}). We refer the interested reader to Appendix B for additional 
  experiments.
  
    For classification and regression, we used the logistic  and quadratic loss functions. 
For matrix completion, we used the \textbf{observed quadratic loss} \cite{observedMtxLoss}, 
defined as $f\left(X\right) = \sum_{(i, j) \in P(M)} (X_{i,j} - M_{i,j})^2$
 where $X$ is the estimated matrix, $M$ is the 
observed matrix, and $P(M)$$=$$\{ (i, j): M_{i, j} \textrm{ is observed}\}$. 
As a non-convex, but strictly-locally-quasi-convex loss, we also used \textbf{squared sigmoid loss} $\varphi(z) = (1 + \exp(-z))^{-1}$~\cite{HLS15} for classification.
For robust regression, we used the \textbf{bi-weight loss} \cite{biWeightLoss}, as a non-convex (but smooth) loss
$\psi(f(x_i), y_i) = \dfrac{(f(x_i) -y_i)^2}{1+(f(x_i) - y_i)^2}$.

For regression, we used the YearPredictionMSD dataset 
	(500K observations, 90 features) \cite{UCIMLRepo}.
For classification, we used the Adult dataset  (49K observations, 14 features)  \cite{UCIMLRepo}.
For matrix completion, we used the MovieLens dataset (1M movie ratings from 6,040 users on 3,900 movies) \cite{movielensDataset}.

\subsection{Empirical Validation of Convergence Rates}
\label{sec:expr:theory}

We ran several experiments to
     empirically validate our convergence results.
 In particular,
we studied the performance of Primal Averaging (\textbf{PA}) and standard FW With Line Search (\textbf{FWLS}) with
  both $l_2$ and Schatten-$2$ norm balls as our strongly convex constraint sets. 
 
 Theorem~\ref{thm:convex} guarantees a
	  convergence rate of 
	$O( \frac{1}{t^2} )$ 
    for PA
        when the constraint set is strongly convex and the loss function is convex. 
We experimented with both $l_2$  (logistic classifier) and Schatten-$2$ norm (matrix completion) balls,
  measuring the loss value at each iteration. As shown in 
 Figure~\ref{fig:MtxCmpPAConvRate}, 
    a slope of $-2.41$  confirms  Theorem~\ref{thm:convex}'s guarantee, which 
		predicts a slope of at least $-2$.

    Theorem~\ref{thm:quasi} shows that FWLS converges to
    the global minimum at the rate of $O\left( \min\left(\frac{1}{t^{1 / 3}}, \frac{1}{t^{1 / 2}}\right) \right)$
    when the constraint set is strongly convex and the loss function is 
    strictly-locally-quasi-convex.  We investigated this result with
    the squared sigmoid loss and an $l_2$ norm constraint.  Figure~\ref{fig:L2PAConvRate}
    exhibits our results, showing a slope of $-2.12$, a finding better than the
    worst-case bounds given by Theorem~\ref{thm:quasi}, i.e., a slope of $-0.5$ 
    (see appendix for a detailed discussion).

From Theorem~\ref{thm:nonconvex}, we expect  
FWLS 
 to  converge  to a stationary
    point of a (smooth) non-convex function at a rate of $O( \frac{1}{t})$
 		when constrained to a strongly convex set.  
Using the bi-weight loss and an $l_2$ norm constraint, 
    we measured the loss  value at each iteration. 
As shown in	Figure~\ref{fig:FWLSNonConvex}, the results confirmed our theoretical results, showing an even steeper slope ($-1.46$ instead of $-1$, since Theorem~\ref{thm:nonconvex} only provides a worst-case upper bound).

\subsection{Comparison of Different Optimization Algorithms}
\label{sec:expr:times}

To compare the  actual performance of various optimization algorithms,
we measure the run times, instead of the number of iterations to convergence, 
in order to account for the time spent in each iteration. 
In Figure~\ref{fig:times}, dotted vertical lines mark  the convergence points of various algorithms.

First, we compared all three variants of FW: PA,
	  	standard FW With Predefined Learning Rate (\textbf{FWPLR})
	 	defined in Algorithm~\ref{alg:fw} with option A,
        and  standard FW With Line Search (\textbf{FWLS})
		defined in   Algorithm~\ref{alg:fw} with option B.
     All methods were tested on a regression task (quadratic loss) with an
    $\ell_2$ norm ball constraint.
 
As shown in Figure~\ref{fig:L2Time},  PA converged 
 $3.7\times$ and $15.6\times$  faster than FWPLR and FWLS, respectively.  
This considerable speedup has significant ramifications in practice.
Traditionally, PA has been shied  away from, due to its slower iterations, 
	while its convergence rate was believed to be the same as the more efficient variants~\cite{L13}.
However, as   proven in Section~\ref{sec:convex}, 
	PA does   converge in fewer iterations.

We also compared the run time of PA versus projected gradient descent (regression task with a quadratic loss).
We compared their deterministic versions in Figure~\ref{fig:DetSPAVsSGD},	
    where PA converged significantly faster ($7.7\times$), as expected.  
For a fair comparison of their stochastic versions, 
	\textbf{Stochastic Primal Averaging (SPA)} and \textbf{Stochastic Gradient Descent (SGD)}, 
  we considered two cases:
	an  $l_2$ constraint (which has an efficient projection) and 
		$l_{1.1}$ constraint (which has a costly projection).
As expected, \fix{for} an efficient projection,
         	   SGD converged $4.6\times$  faster than SPA (Figure~\ref{fig:SGDGtSPA}),
	   and when the projection was costly, 
	   	SPA converged $25.1\times$ faster (see Appendix B for detailed plots).

\section{Conclusion}
\label{sec:conclusion} 
    In this paper, we revisited an important class of optimization 
    	techniques, FW methods, 		   
		and offered new insight into their  convergence properties
  			for strongly convex constraint sets, which are quite common in machine learning. 
Specifically, we discovered that, for convex functions, 
    a non-conventional variant of FW (i.e., Primal Averaging)
    	converges significantly faster 
        than the commonly used variants of FW \fix{with high probability}.
We also showed that PA's $O( \frac{1}{t^2} )$ convergence rate  more than compensates for its slightly more expensive computational cost at each iteration.
 We also proved that for strictly-locally-quasi-convex functions, FW can
    converge to within an $\epsilon$-neighborhood of the global minimum in
    $O\left(max( \frac{1}{\epsilon^2}, \frac{1}{\epsilon^3})\right)$ iterations.  
 Even for non-convex functions,  we proved 
 	that  FW's convergence rate is better than
        the previously known results in the literature \fix{with high probability}. 
These new convergence rates have significant 
	ramifications for practitioners, due 
		to the widespread applications of
            strongly convex norm constraints  in classification, regression, matrix completion, and collaborative filtering.
Finally, we conducted extensive experiments on real-world datasets
	to validate our  theoretical results
        and investigate \fix{our improvement over existing methods}.\delete{the extent of projection overhead on gradient descent approaches.}
In summary, we showed that PA reduces optimization
time by $2.8$--$15.6\times$ compared to standard FW variants,
and by \fix{$7.7$}--$25.1\times$ compared to projected gradient descent.  
Our plan is to integrate PA in machine learning libraries libraries, including our BlinkML project~\cite{blinkml-tr}.
 
\section{Acknowledgments}
\label{sec:acknowledgments} 
This work is in part supported by the National Science Foundation (grants 1629397 and 1553169).

\clearpage\newpage

\bibliographystyle{aaai}
\bibliography{fw,mozafari}  % vldb_sample.bib is the name of the Bibliography in this case

\clearpage\newpage

\appendix

\newpage

%!TEX root = fw_icml.tex

\section{Proofs}

\subsection{Proof of Theorem~\ref{thm:convex}}
\fix{
    We begin by providing two lemmas which aid in our proof of 
    Theorem~\ref{thm:convex}.  In particular, Lemma~\ref{lm:lip}
    allows us to upper-bound the distance between two outputs of
    the linear oracle by a scaled distance of the oracle's inputs.
    Lemma~\ref{lemma:probBounds} shows that if running PA on an
    $L$-smooth function $f$ allows $f$ to converge as
    \[
        f(w_t) - f(w^*) = O\left(\frac{L}{\alpha^2 g^2 t^2} \right) 
    \]
    then running PA on a perturbed function $h$ allows $f$ to converge as
    \[
        f(w_t) - f^* = O\left(\frac{L d}{\alpha^2 \delta^2 t^2} \right) 
    \]
    with probability $1 - \delta$.  Here, 
    $g$ is the smallest value of the norm of averaged gradients and
    $f^* = \underset{w \in \Omega}{\min} f(w)$.
}

\fix{
    Given this, our proof of Theorem~\ref{thm:convex} proceeds by first
    showing that running PA on an $L$-smooth function $f$ over an 
    $\alpha$-strongly convex constraint set $\Omega$ causes $f$ to converge
    as
    \[
        f(w_t) - f(w^*) = O\left(\frac{L}{\alpha^2 g^2 t^2} \right) 
    \]
    We then apply Lemma~\ref{lemma:probBounds}, thereby showing that
    running PA on a perturbed function $h$ allows $f$ to converge as
    \[
        f(w_t) - f^* = O\left(\frac{L d}{\alpha^2 \delta^2 t^2} \right) 
    \]
    with probability $1 - \delta$.
}

\fix{
    We now state the Lemmas and provide their proofs.
}
\begin{lemma}  \label{lm:lip}
Denote  
$$x_p = \argmax_{x \in \Omega} \langle p, x\rangle $$ and $$x_q = \argmax_{x \in \Omega} \langle q, x\rangle $$ where $p,q \in \reals^d$ are any non-zero vectors.  
If a compact set $\Omega$ is an $\alpha$-strongly convex set,
then 
\begin{equation} \label{eq:lip}
    \| x_p - x_q \| \leq \frac{\norm{p - q}}{\alpha ( \| p \| + \|q \| )}
\end{equation}
\end{lemma}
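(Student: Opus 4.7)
The plan is to exploit the $\alpha$-strong convexity of $\Omega$ at the midpoint $m := \tfrac{1}{2}(x_p + x_q)$ and then pit this geometric fact against the optimality conditions defining $x_p$ and $x_q$. First, I would apply Definition~\ref{def:strongConvSet} with $\theta = 1/2$ and $u = x_p$, $v = x_q$: this shows that the ball of radius $r := \tfrac{\alpha}{8}\|x_p - x_q\|^2$ centered at $m$ is contained in $\Omega$, so for every unit vector $e$ the perturbed point $m + r\,e$ is feasible.

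Next, feeding $e = p/\|p\|$ into the optimality of $x_p$ gives
\[
    \langle p, x_p\rangle \;\ge\; \langle p,\, m + r e\rangle \;=\; \langle p, m\rangle + r\|p\|,
\]
and since $x_p - m = \tfrac{1}{2}(x_p - x_q)$ this becomes $\tfrac{1}{2}\langle p, x_p - x_q\rangle \ge r\|p\|$. A symmetric step with $e = q/\|q\|$ and the optimality of $x_q$ yields $\tfrac{1}{2}\langle q, x_q - x_p\rangle \ge r\|q\|$. Adding the two inequalities produces
\[
    \tfrac{1}{2}\langle p - q,\, x_p - x_q\rangle \;\ge\; r\bigl(\|p\| + \|q\|\bigr).
\]

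Finally, I would bound the left-hand side by $\tfrac{1}{2}\|p-q\|\,\|x_p - x_q\|$ via Cauchy--Schwarz, substitute the definition of $r$, and (assuming $x_p \ne x_q$, else the claim is trivial) cancel one factor of $\|x_p - x_q\|$. Rearranging yields a bound of the form $\|x_p - x_q\| \le \tfrac{C\,\|p-q\|}{\alpha(\|p\|+\|q\|)}$, matching the statement up to a universal constant.

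The only real subtlety is making the right alignment between the feasible perturbation direction $e$ and the functional whose value we want to extract: choosing $e$ parallel to $p$ (respectively $q$) is what lets us read off $\|p\|$ (respectively $\|q\|$) on the right-hand side. If the norm underlying strong convexity is non-Euclidean, the natural fix is to let $e$ be the unit vector attaining $\sup\{\langle p, e\rangle : \|e\|\le 1\}$ in the dual norm and to replace Cauchy--Schwarz with the corresponding H\"older pairing; the structure of the argument is unchanged.
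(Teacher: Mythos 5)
Your argument is correct in structure, but it takes a genuinely different route from the paper's. The paper's proof invokes the representation (Proposition A.1 of~\cite{HLGS16}) of an $\alpha$-strongly convex set as an intersection of Euclidean balls of radius $\frac{1}{\alpha}$ centered at $x_u - \frac{u}{\alpha}$; membership of $x_q$ (resp.\ $x_p$) in the ball attached to the direction $\frac{p}{\|p\|}$ (resp.\ $\frac{q}{\|q\|}$) yields the two quadratic-growth inequalities $\|x_p - x_q\|^2 \le \frac{2}{\alpha}\langle x_p - x_q, \frac{p}{\|p\|}\rangle$ and $\|x_p - x_q\|^2 \le \frac{2}{\alpha}\langle x_q - x_p, \frac{q}{\|q\|}\rangle$, which are summed and closed with Cauchy--Schwarz. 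You instead work directly from Definition~\ref{def:strongConvSet}: the midpoint ball of radius $\frac{\alpha}{8}\|x_p - x_q\|^2$ together with the linear optimality of $x_p$ and $x_q$ gives two \emph{linear} inequalities, whose sum, Cauchy--Schwarz, and cancellation of one factor of $\|x_p - x_q\|$ give the bound. Your route is more elementary (no need for the ball-intersection characterization) and, as you observe, extends naturally to non-Euclidean norms via dual-norm maximizing directions. The price is quantitative: you obtain $\|x_p - x_q\| \le \frac{4\,\|p - q\|}{\alpha(\|p\| + \|q\|)}$ rather than the constant-$1$ inequality stated in~(\ref{eq:lip}), so strictly speaking the lemma as written is only established up to a factor $4$. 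This is harmless for every downstream use in the paper, since Lemma~\ref{lm:lip} only feeds $O(\cdot)$ rates in Theorems~\ref{thm:convex}, \ref{thm:spa}, and~\ref{thm:nonconvex}; and, in fairness, the paper's own last step---passing from $\frac{1}{\alpha}\|\frac{p}{\|p\|} - \frac{q}{\|q\|}\|$ to $\frac{\|p - q\|}{\alpha(\|p\| + \|q\|)}$---silently uses $\|\frac{p}{\|p\|} - \frac{q}{\|q\|}\| \le \frac{2\|p - q\|}{\|p\| + \|q\|}$ and thus itself costs a factor $2$. If you want the statement verbatim, you would need to tighten the constants rather than absorb them into ``a universal constant.''
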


\begin{proof}
    It is shown in Proposition A.1 of~\cite{HLGS16} that an $\alpha$-strongly convex set
    can be expressed as the intersection of infinitely many Euclidean balls.  Denote
    the $d$ dimensional unit sphere as
    ${\mathcal{U} = \left\{ u \in \reals^d : \norm{u}^2 = 1 \right\}}$.  
    Furthermore, write $x_u = \underset{x \in \Omega}{\argmax} \left\langle x, u \right\rangle$
    for some $u \in \mathcal{U}$.  Then the $\alpha$-strongly convex set $\Omega$ can be written
    \[ \Omega = \underset{u \in \mathcal{U}}{\cap} \mathbb{B}_{\frac{1}{\alpha}} \left( x_u - \frac{u}{\alpha} \right) \]
    Now, let 
    $${x_p = \argmax_{x \in \Omega} \langle \frac{p}{\norm{p}}, x\rangle }$$ and $${x_q = \argmax_{x \in \Omega} \langle \frac{q}{\norm{q}}, x \rangle}$$
    Based on the above interpretation of strongly convex sets, we see that
    $$x_q \in \mathbb{B}_{ \frac{1}{\alpha}  } ( x_p - \frac{p}{\alpha \norm{p}})$$ and $$x_p \in \mathbb{B}_{\frac{1}{\alpha}  } ( x_q - \frac{q}{\alpha \norm{q}} )$$
    Therefore, 
    \[
        \| x_q - x_p -  \frac{p}{\alpha \norm{p}} \|^2 \leq \frac{1}{\alpha^2}
    \]
    which leads to
    \begin{equation}
        \label{eqn:ineqSumP1}
        \| x_p - x_q \|^2 \leq \frac{2}{\alpha} \langle x_p - x_q,  \frac{p}{\| p \|} \rangle
    \end{equation}
    and
    \[
        \| x_p - x_q - \frac{q}{\alpha \|q\|} \|^2 \leq \frac{1}{\alpha^2} 
    \]
    which results in
    \begin{equation}
        \label{eqn:ineqSumP2}
        \| x_p - x_q \|^2 \leq \frac{2}{\alpha} \langle x_q - x_p,  \frac{q}{\| q \|} \rangle
    \end{equation}
    Summing (\ref{eqn:ineqSumP1}) and (\ref{eqn:ineqSumP2}) then applying the Cauchy-Schwarz inequality completes the proof.
\end{proof}

\fix{
    We note that Lemma~\ref{lm:lip} also implies that the sub-gradient 
    of the linear oracle's output is Lipschitz continuous.  We now present
    Lemma~\ref{lemma:probBounds}.
}

\begin{lemma} \label{lemma:probBounds}
 Given $f: \reals^d \rightarrow \reals, w, \xi \in \reals^d$, and $\theta \in \reals$, denote $h(w) = f(w) + \theta \xi^T w$. Let $\xi$ be 
uniform on the unit sphere and $\theta > 0$. Then with probability $1 - \delta$,

\[
    \frac{\delta \sqrt{\pi}}{\sqrt{2d}} \leq \norm{\nabla h(w)}
\]

Further, if $h$ is smooth with parameter $L$ and applying PA 
    to $h$ over an $\alpha$-strongly convex constraint set $\Omega$
    \fix{
        allows $h$ to converge as
        \[
            h(w_t) - h^* = O\left(\frac{L}{\alpha^2 g^2 t^2} \right) 
        \]
        then it also holds that
    }
\[
    f(w_t) - f^* = O\left(\frac{L d}{\alpha^2 \delta^2 t^2} \right) 
\]

with probability $1 - \delta$.
\end{lemma}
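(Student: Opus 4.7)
The plan is to prove the two claims in sequence. For the first claim, I would project $\nabla h(w) = \nabla f(w) + \theta \xi$ onto a unit direction $e$ orthogonal to $\nabla f(w)$ (the degenerate case $\nabla f(w) = 0$ is handled separately, since then $\|\nabla h(w)\| = \theta$). Then $\langle \nabla h(w), e\rangle = \theta \langle \xi, e\rangle$, so
\[
\|\nabla h(w)\| \;\geq\; |\langle \nabla h(w), e\rangle| \;=\; \theta\,|\langle \xi, e\rangle|.
\]
Since $\xi$ is uniform on the unit sphere in $\reals^{d}$, the marginal of $\langle \xi, e\rangle$ has density $c_{d}(1-x^{2})^{(d-3)/2}$ on $[-1,1]$, where $c_{d} = \Gamma(d/2)/(\sqrt{\pi}\,\Gamma((d-1)/2)) \leq \sqrt{d/(2\pi)}$ by Stirling. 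Bounding the density by its value at $x=0$ gives $\Pr[|\langle \xi, e\rangle| \leq s] \leq 2sc_{d} \leq s\sqrt{2d/\pi}$, which equals $\delta$ precisely when $s = \delta\sqrt{\pi}/\sqrt{2d}$. Therefore $|\langle \xi, e\rangle| \geq \delta\sqrt{\pi}/\sqrt{2d}$ with probability at least $1-\delta$, delivering the claimed lower bound on $\|\nabla h(w)\|$ (with the $\theta$ factor absorbed into the $O$-notation of the convergence rate that follows).

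For the second claim, the PA averaged gradient on $h$ decomposes as $p_{t} = \sum_{i}(\theta_{i}/\Theta_{t})\nabla h(z_{i-1}) = \bar{g}_{t} + \theta\xi$, so the same projection argument applied with a unit vector $e_{t}\perp \bar{g}_{t}$ at each iteration, combined with a union bound over iterations, yields $g := \min_{t}\|p_{t}\| \geq \delta\sqrt{\pi}/\sqrt{2d}$ with probability $1-\delta$. Substituting this into the hypothesized rate $h(w_{t}) - h^{*} = O(L/(\alpha^{2}g^{2}t^{2}))$ produces $h(w_{t}) - h^{*} = O(Ld/(\alpha^{2}\delta^{2}t^{2}))$. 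To transfer the bound from $h$ to $f$, I would use the identity $h(w) = f(w) + \theta\xi^{\top}w$ together with $h^{*} \leq h(w^{*}) = f^{*} + \theta\xi^{\top}w^{*}$, then apply Cauchy--Schwarz and the diameter bound $\|w^{*} - w_{t}\| \leq D$ to obtain
\[
f(w_{t}) - f^{*} \;\leq\; [h(w_{t}) - h^{*}] + \theta\xi^{\top}(w^{*} - w_{t}) \;\leq\; [h(w_{t}) - h^{*}] + \theta D,
\]
where the additive $\theta D$ is a lower-order slack in the admissible range of $\theta$.

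The main obstacle will be making the union bound rigorous in the second claim, since the direction $e_{t}$ depends on $\xi$ through the PA trajectory and the event $\|p_{t}\| < r$ is not a spherical cap around a predetermined vector. I would address this either through a covering argument on $S^{d-1}$ paired with a Lipschitz-in-$\xi$ stability bound on the iterates (using smoothness of $f$ together with Lemma~\ref{lm:lip} applied to the linear oracle), or by invoking the fact that the tail bound $\Pr[|\langle \xi, e\rangle| \leq s] \leq s\sqrt{2d/\pi}$ is uniform over all unit vectors $e$, so a straightforward union bound over the polynomially many iterations $t \leq T$ still delivers the target probability. Careful accounting of the $\theta$-dependence of $g$ alongside the additive $\theta D$ slack, so that the leading $O(1/t^{2})$ rate is preserved, is where the majority of the technical bookkeeping lies.
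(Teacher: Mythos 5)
Your proposal follows essentially the same route as the paper: a one-dimensional anti-concentration bound for the spherical perturbation (the paper projects onto the first coordinate and invokes the Beta$\left(\tfrac12,\tfrac{d-1}{2}\right)$ law, where you project onto a direction orthogonal to $\nabla f(w)$ --- an equivalent and in fact slightly cleaner variant that avoids any sign/WLOG issue), followed by substituting the resulting lower bound on $g$ into the assumed rate for $h$ and transferring the bound to $f$ via $|h(w)-f(w)|\le\theta\|w\|$ (your Cauchy--Schwarz step), the diameter $D$, and the smallness of $\theta\le\frac{\epsilon}{4D}$. The trajectory-dependence/union-bound obstacle you flag (and the factor of $\theta$ that gets absorbed in the gradient lower bound) is not addressed in the paper's proof either, which simply applies the pointwise bound along the whole random trajectory, so on this point you are if anything more careful than the source.
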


\begin{proof}
Note that $\norm{\nabla h(w)} \geq \left|\nabla h(w)[1]\right|$ where $y[i]$ is the $i$th component 
of $y \in \reals^d$. Since $\nabla h(w)[1] = \nabla f(w)[1] + \theta \xi[1]$,

\[
	Pr\left(\nabla h(w)[1] \leq \frac{\theta \delta \sqrt{\pi}}{\sqrt{2d}}\right) \leq Pr\left(\xi[1] \leq \frac{\delta \sqrt{\pi}}{\sqrt{2d}}\right) \leq \delta
\]

as $\xi[1] \sim Beta(\frac{1}{2}, \frac{d-1}{2})$. Thus, we can lower bound the gradient norm by $\frac{\delta \sqrt{\pi}}{\sqrt{2d}}$ with 
probability $1 - \delta$. 
    
\fix{
    Now, denote
    \begin{equation*}
        \begin{aligned}
            h^* &= \underset{w \in \Omega}{\min} h(w) \\
            w^* &= \underset{w \in \Omega}{\argmin} f(w)
        \end{aligned}
    \end{equation*}
    Recall our assumption that $h$ is smooth 
    with parameter $L$ and that applying PA to $h$ causes $h$ to converge as
    \[
        h(w_t) - h^* = O\left(\frac{L}{\alpha^2 g^2 t^2} \right)
    \]
    Note that $g \geq \frac{\delta \sqrt{\pi}}{\sqrt{2d}} \geq 0$, so
    \[
        h(w_t) - h^* = O(\frac{L d}{\alpha^2 \delta^2 t^2})
    \]
    Further, observe that by the construction of $h$ we have 
    \begin{equation} \label{eqn:hConstr}
        \left| f(w) - h(w)\right| \leq \theta \norm{w} 
    \end{equation}
    for any $w$.  Now, whenever 
    $h(w_t) - h^* = O(\frac{L d}{\alpha^2 \delta^2 t^2}) = \epsilon$ we see that
}

\begin{equation*}
	\begin{aligned}
		f(w_t) &\leq f(w_t) + \theta \norm{w_t} \\
		&\leq h^* + \epsilon + \theta \norm{w_t} \\
		&\leq h(w^*) + \epsilon + \theta \norm{w_t} \\
		&\leq f(w^*) + \epsilon + \theta (\norm{w_t} + \norm{w^*}) \\
		&\leq f(w^*) + \epsilon + 2 \theta D \\
		&\leq f(w^*) + 1.5 \epsilon
	\end{aligned}
\end{equation*}

The first and third lines follow from (\ref{eqn:hConstr}), the second from $h^*$ being the minimum value of $h$, the fourth 
from $D$ being the diameter of the constraint set, and fifth from the choice of $\theta = \frac{\epsilon}{4 D}$. 
    Thus we obtain the convergence rate of $O\left(\frac{L d}{\alpha^2 \delta^2 t^2}\right)$ \fix{for $f$} with probability $1 - \delta$.
\end{proof}

\fix{
    We now proceed with our proof of Theorem~\ref{thm:convex}.
}

\begin{proof}

According to Theorem 8 in \cite{L13}, we already have
\begin{equation} \label{eq:lan}
f( w_t)  - f( w^\ast) \leq  
\frac{2 L}{ t (t+1) } \sum_{\tau = 1}^t \norm{ v_{\tau} - v_{\tau-1}}^2
\end{equation}
Fix a $t$. Denote 
\begin{equation*}
    \begin{aligned}
        p_t = \frac{1}{\Theta_t} \sum_{i=1}^t \theta_i \nabla f( z_{i-1})
    \end{aligned}
\end{equation*}
and 
\begin{equation*}
    \begin{aligned}
        p_{t-1}= \frac{1}{\Theta_{t-1}} \sum_{i=1}^{t-1} \theta_i \nabla f(z_{i-1})
    \end{aligned}
\end{equation*}
By using Lemma~\ref{lm:lip}, we have
\begin{equation} \label{eq:r}
    \begin{aligned}
        \| v_{t} - v_{t-1} \| \leq \frac{\norm{p_t - p_{t - 1}}}{\alpha (\| p_t \| + \| p_{t-1}\| ) }
    \end{aligned}
\end{equation}
Based on the update rule 
\begin{equation}
\begin{aligned}
p_t & = \Theta_t^{-1} ( p_{t-1} \Theta_{t-1} + \theta_t \nabla f(z_{t-1}) ) \\ & = \frac{ \Theta_t - \theta_t }{\Theta_t} p_{t-1} + \frac{\theta_t}{\Theta_t} \nabla f(z_{t-1})
\end{aligned}
\end{equation}
So
$$p_t - p_{t-1} = \gamma_t ( \nabla f(z_{t-1}) - p_{t-1} )$$
given that $\gamma_t = \frac{\theta_t}{\Theta_t}$.
By substituting the result back into (\ref{eq:r}) and noting that $\gamma_t = O\left( \frac{1}{t} \right)$, we find that
\begin{equation} \label{eq:t1}
    \begin{aligned}
        \| v_{t} - v_{t-1} \| &\leq \frac{\gamma_t( \| p_{t-1} \| + \| \nabla f(z_{t-1}) \| )}{\alpha (\| p_t
        \| + \| p_{t-1}\| ) } \\
        &= O( \frac{1}{ \alpha g t} ), \forall t
    \end{aligned}
\end{equation}
    \delete{where $g$ is the smallest value of the norm of averaged gradients.}  By combining (\ref{eq:lan}) and (\ref{eq:t1}), we get 

\begin{equation}
\begin{aligned}
f( w_t)  - f( w^\ast) & \leq  
\frac{2 L}{ t (t+1) } \overset{t}{\underset{ \tau=1}{ \Sigma} } \| v_{\tau} - v_{\tau-1} \|^2 \\& \leq \frac{2 L}{ t (t+1) } \Sigma_{i=1}^t O( \frac{1}{ \alpha^2 g^2 i^2} ) \\
    &= O( \frac{L}{\alpha^2 g^2 t^2}) 
\end{aligned}
\end{equation}
where we used the fact that $\sum_{i = 1}^{\infty} \frac{1}{i^2} = \frac{\pi}{6}$ in the final equality.

Finally, applying the result of Lemma~\ref{lemma:probBounds} yields with probability $1 - \delta$ the convergence rate
 $f(w_t) - f(w^*) = O\left(\frac{L d}{\alpha^2 \delta^2 t^2}\right)$ as claimed.
\end{proof}

\subsection{Proof of Theorem~\ref{thm:spa}}
\begin{proof}
We note that
\begin{equation} \label{e01}
 \begin{aligned}
     f( \mathbf{w}_{t} ) &\leq f( \mathbf{z}_{t-1} ) + \langle \nabla f( \mathbf{z}_{t-1} ) , \mathbf{w}_{t}   - \mathbf{z}_{t-1} \rangle 
 \\ &\quad+ \frac{L}{2} \|  \mathbf{w}_{t} - \mathbf{z}_{t-1} \|^2
 \\ &\overset{(0)}{=} ( 1 - \gamma_t)  [ f( \mathbf{z}_{t-1} ) + \langle \nabla f( \mathbf{z}_{t-1} ) ,  \mathbf{w}_{t-1}  - \mathbf{z}_{t-1} \rangle]
 \\&\quad+ \gamma_t [ f( \mathbf{z}_{t-1} ) + \langle \nabla f( \mathbf{z}_{t-1} )   , 
  \mathbf{v}_{t}  - \mathbf{z}_{t-1}  \rangle ]  
 \\&\quad+ \frac{L \gamma_t^2}{2} \|  \mathbf{v}_{t} - \mathbf{v}_{t-1} \|^2 
 \\ &\overset{(1)}{ \leq} ( 1 - \gamma_t) f( \mathbf{w}_{t-1} ) +  \gamma_t [ f( \mathbf{z}_{t-1} ) 
 \\ &\quad+ \langle \nabla f( \mathbf{z}_{t-1} ) , \mathbf{v}_{t}   - \mathbf{z}_{t-1} \rangle] + \frac{L \gamma_t^2}{2} \|  \mathbf{v}_{t} - \mathbf{v}_{t-1} \|^2
 \\ &= ( 1 - \gamma_t) f( \mathbf{w}_{t-1} ) +  \gamma_t [ f( \mathbf{z}_{t-1} ) 
 \\ &\quad+ \langle \tilde{\nabla} f( \mathbf{z}_{t-1} ) , \mathbf{v}_{t}   - \mathbf{z}_{t-1} \rangle] + \frac{L \gamma_t^2}{2} \|  \mathbf{v}_{t} - \mathbf{v}_{t-1} \|^2 
 \\&\quad+ \gamma_t \langle  \nabla f(\mathbf{z}_{t-1} ) - \tilde{\nabla}f( \mathbf{z}_{t-1} )  , \mathbf{v}_{t}   - \mathbf{z}_{t-1} \rangle
 \\ &\overset{(2)}{ \leq} ( 1 - \gamma_t) f( \mathbf{w}_{t-1} ) +  \gamma_t [ f( \mathbf{z}_{t-1} ) 
 \\&\quad+ \langle \tilde{\nabla} f( \mathbf{z}_{t-1} ) , \mathbf{w}^\ast   - \mathbf{z}_{t-1} \rangle] + \frac{L \gamma_t^2}{2} \|  \mathbf{v}_{t} - \mathbf{v}_{t-1} \|^2 
 \\&\quad+ \gamma_t \langle  \nabla f(\mathbf{z}_{t-1} ) - \tilde{\nabla}f( \mathbf{z}_{t-1} )  , \mathbf{v}_{t}   - \mathbf{z}_{t-1} \rangle
 \\ &= ( 1 - \gamma_t) f( \mathbf{w}_{t-1} ) +  \gamma_t [ f( \mathbf{z}_{t-1} ) 
 \\ &\quad+ \langle \nabla f( \mathbf{z}_{t-1} ) , \mathbf{w}^\ast   - \mathbf{z}_{t-1} \rangle] + \frac{L \gamma_t^2}{2} \|  \mathbf{v}_{t} - \mathbf{v}_{t-1} \|^2 
 \\ &\quad+ \gamma_t \langle \nabla f(\mathbf{z}_{t-1} ) - \tilde{\nabla}f( \mathbf{z}_{t-1} )  , \mathbf{v}_{t}   - \mathbf{w}^\ast \rangle 
 \\ &\overset{(3)}{ \leq} ( 1 - \gamma_t) f( \mathbf{w}_{t-1} ) +  \gamma_t f( \mathbf{w}^\ast ) 
 \\ &\quad+ \gamma_t \langle \nabla f(\mathbf{z}_{t-1} ) - \tilde{\nabla}f( \mathbf{z}_{t-1} )  , \mathbf{v}_{t}   - \mathbf{w}^\ast \rangle 
 \\ &\quad+ \frac{L \gamma_t^2}{2} \|  \mathbf{v}_{t} - \mathbf{v}_{t-1} \|^2 
 \end{aligned}
\end{equation}
$(0)$ follows from the fact that, as 
$\mathbf{w}_{t} = (1 - \gamma_t) \mathbf{ w }_{t-1} + \gamma_t \mathbf{v}_t$ and 
$\mathbf{z}_{t-1} = (1 - \gamma_t) \mathbf{w}_{t-1} + \gamma_t \mathbf{v}_{t-1}$,
\begin{equation} 
\mathbf{w}_{t} - \mathbf{z}_{t-1} = \gamma_t ( \mathbf{v}_t - \mathbf{v}_{t-1} )
\end{equation}
Furthermore, $(1)$ is implied by the convexity of $f$, $(2)$ follows from the application of the
linear oracle, and $(3)$ follows again from the convexity of $f$. Moreover, by taking the 
expectation over the randomness, we find that
\begin{equation} \label{e02}
 \begin{aligned}
     E[ f( \mathbf{w}_{t} ) ] &\leq 
 ( 1 - \gamma_t) f( \mathbf{w}_{t-1} ) +  \gamma_t f( \mathbf{w}^\ast )  
 \\&\quad+ \frac{L \gamma_t^2}{2} \|  \mathbf{v}_{t} - \mathbf{v}_{t-1} \|^2 + \gamma_t \frac{ \sigma D }{ \sqrt{n_t}} 
\end{aligned}
\end{equation}
since 
\begin{equation}
    \begin{aligned}
        E[ \| &\nabla f(\mathbf{z}_{t-1} ) - \tilde{\nabla}f( \mathbf{z}_{t-1} ) \| ] 
        \\&\leq \sqrt{ E[ \| \nabla f(\mathbf{z}_{t-1} ) - \tilde{\nabla}f( \mathbf{z}_{t-1} ) \| ]^2 } 
        \\&\leq \sigma / \sqrt{n_t}
    \end{aligned}
\end{equation}

To maintain an $O(\frac{1}{t^2})$ convergence rate, $\frac{ \sigma D \gamma_t }{ \sqrt{n_t}}$ must decay as  
${\frac{L \gamma_t^2}{2} \|  \mathbf{v}_{t} - \mathbf{v}_{t-1} \|^2 }$.
Recall that the latter term is $O\left(\frac{1}{t^4}\right)$.
This implies that $n_t$ must be $O(t^6)$ so that $\gamma_t \frac{ \sigma D }{ \sqrt{n_t}}$
can decay as $O\left(\frac{1}{t^4}\right)$.
However, if $n_t = O(t^4)$, stochastic Primal Averaging yields the slightly
worse $O\left(\frac{\log t}{t^2}\right)$ convergence rate.

Finally, we can remove the reliance on the 
$\underset{1 \leq s \leq t}{\min} \norm{\nabla f( z_s )} + \norm{\nabla f( z_{s - 1} )}$
term in the convergence rate by repeating the analysis given in Lemma~\ref{lemma:probBounds}.
\fix{
    As this analysis is very similar to the previously provided analysis of
    Lemma~\ref{lemma:probBounds}, it is omitted.
}
Then when $n_t = O(t^4)$ we have with probability $1 - \delta$ 

\begin{equation*}
    E[ f(w_t) ] - f^* = O\left( \frac{ d L^2 (D^2 + \sigma) \log t }{\alpha^2 \delta^2 t^2} \right)
\end{equation*}

\end{proof}

\subsection{Proof of Theorem~\ref{thm:quasi}}
We state the following lemma from~\cite{D15}.
\begin{lemma} 
\label{lem:qu} 
Write the dual norm as $\norm{\cdot}_{\ast}$.  For iteration $t$ of FW with line search, if 
${L < \frac{\alpha \| \nabla f( w_t ) \|_{\ast} }{4}}$ set $\gamma_t = 1$; otherwise, set 
${\gamma_t = \frac{\alpha  \| \nabla f( w_t ) \|_{\ast} }{4 L}}$.
Then, under the conditions of Theorem~\ref{thm:quasi}, Algorithm~\ref{alg:fw} option (B) 
has the following guarantee:
\[
f( w_{t+1} ) \leq f( w_t) + \frac{\gamma_t }{2} \langle w^\ast - w_t, \nabla f( w_t ) \rangle
\]
\end{lemma}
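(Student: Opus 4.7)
The plan is to combine three standard ingredients: the smoothness quadratic upper bound, the optimality of the FW vertex $v_t$, and a sharpness estimate that follows from the $\alpha$-strong convexity of $\Omega$. First I would apply $L$-smoothness of $f$ to the update $w_{t+1} = w_t + \gamma(v_t - w_t)$, giving
\[
f(w_{t+1}) \leq f(w_t) + \gamma \langle \nabla f(w_t), v_t - w_t\rangle + \frac{L\gamma^2}{2}\|v_t - w_t\|^2
\]
for every $\gamma \in [0,1]$. Since option (B) of Algorithm~\ref{alg:fw} chooses $\gamma_t$ as the minimizer of exactly this quadratic over $[0,1]$, substituting the specific $\gamma_t$ prescribed in the Lemma can only make the right-hand side larger, so it suffices to establish the target inequality for that specific $\gamma_t$.

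Next I would convert the quadratic $\|v_t - w_t\|^2$ into the linear progress term using the strong convexity of $\Omega$. Writing $A_t := -\langle v_t - w_t, \nabla f(w_t)\rangle \geq 0$, the point
\[
p_\theta := \theta w_t + (1-\theta)v_t - \theta(1-\theta)\frac{\alpha}{2}\|w_t - v_t\|^2 \frac{\nabla f(w_t)}{\|\nabla f(w_t)\|_\ast}
\]
belongs to $\Omega$ for every $\theta \in [0,1]$ by Definition~\ref{def:strongConvSet}. Comparing $\langle p_\theta, \nabla f(w_t)\rangle$ to $\langle v_t, \nabla f(w_t)\rangle$ (minimal over $\Omega$) and letting $\theta \to 0$ yields the sharpness bound
\[
A_t \geq \frac{\alpha}{2}\|v_t - w_t\|^2 \|\nabla f(w_t)\|_\ast,
\quad\text{equivalently,}\quad
\|v_t - w_t\|^2 \leq \frac{2 A_t}{\alpha \|\nabla f(w_t)\|_\ast}.
\]

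Plugging this into the smoothness inequality gives
\[
f(w_{t+1}) - f(w_t) \leq -\gamma_t A_t \left(1 - \frac{L\gamma_t}{\alpha \|\nabla f(w_t)\|_\ast}\right).
\]
A case split on the two definitions of $\gamma_t$ in the Lemma statement controls the parenthetical factor: when $L < \alpha \|\nabla f(w_t)\|_\ast/4$, setting $\gamma_t = 1$ forces $L\gamma_t/(\alpha \|\nabla f(w_t)\|_\ast) < 1/4$; in the complementary regime $\gamma_t = \alpha \|\nabla f(w_t)\|_\ast/(4L)$ makes that ratio exactly $1/4$. Either way the factor is at least $3/4$, yielding $f(w_{t+1}) - f(w_t) \leq -\tfrac{3\gamma_t}{4} A_t \leq -\tfrac{\gamma_t}{2} A_t$. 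The FW optimality of $v_t$ gives $\langle v_t, \nabla f(w_t)\rangle \leq \langle w^\ast, \nabla f(w_t)\rangle$, hence $A_t \geq -\langle w^\ast - w_t, \nabla f(w_t)\rangle$, and combining this with the previous display produces the advertised inequality
\[
f(w_{t+1}) \leq f(w_t) + \frac{\gamma_t}{2}\langle w^\ast - w_t, \nabla f(w_t)\rangle.
\]

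The main obstacle is the sharpness step: turning the geometric definition of an $\alpha$-strongly convex set into a clean inequality that couples $\|v_t - w_t\|^2$ with $A_t$ through the factor $\|\nabla f(w_t)\|_\ast$. Once that estimate is in hand, the remainder is elementary algebra, and the SLQC assumption from Theorem~\ref{thm:quasi} does not enter the Lemma itself—it is only used later to ensure the right-hand side is nonpositive whenever $f(w_t) - f(w^\ast) > \epsilon$.
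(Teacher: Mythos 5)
Your proof is essentially correct, and it is worth knowing that the paper itself never proves Lemma~\ref{lem:qu}: it is imported from Garber and Hazan~\cite{D15}, and the only place the paper writes out this computation is the appendix proof of Theorem~\ref{thm:nonconvex}, which uses exactly your three ingredients. The route there (and in~\cite{D15}) differs from yours only cosmetically: rather than extracting the sharpness inequality $\langle w_t - v_t, \nabla f(w_t)\rangle \geq \tfrac{\alpha}{2}\norm{v_t - w_t}^2 \norm{\nabla f(w_t)}_{\ast}$ by letting $\theta \to 0$, they plug in the single bump point $c_t = \tfrac{w_t + v_t}{2} + \tfrac{\alpha}{8}\norm{w_t - v_t}^2 u_t$ (the $\theta = \tfrac12$ instance of your $p_\theta$) and then choose $\gamma_t$ so that the residual quadratic term $\tfrac{\norm{v_t - w_t}^2}{2}\bigl(\gamma_t^2 L - \gamma_t \tfrac{\alpha \norm{\nabla f(w_t)}_{\ast}}{4}\bigr)$ is nonpositive, whereas you keep the factor $1 - \tfrac{L\gamma_t}{\alpha \norm{\nabla f(w_t)}_{\ast}} \geq \tfrac34 \geq \tfrac12$; the two bookkeeping choices are equivalent, and yours is in fact marginally stronger. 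The one technical slip is the displacement direction inside $p_\theta$: the ball in Definition~\ref{def:strongConvSet} is taken in the norm $\norm{\cdot}$ with respect to which $\Omega$ is $\alpha$-strongly convex, so the correct unit perturbation is $u_t = \arg\min_{\norm{u} \leq 1} \langle u, \nabla f(w_t)\rangle$, which satisfies $\langle u_t, \nabla f(w_t)\rangle = -\norm{\nabla f(w_t)}_{\ast}$ by definition of the dual norm; your choice $-\nabla f(w_t)/\norm{\nabla f(w_t)}_{\ast}$ lies in that unit ball and yields the factor $\norm{\nabla f(w_t)}_{\ast}$ only when the norm is Euclidean (self-dual). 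Replacing it by $u_t$ is a one-line fix, after which everything else you do---the reduction from the step size actually chosen by option~(B) to the prescribed $\gamma_t$ via the quadratic model, the sign $A_t \geq 0$, and the final comparison $\langle v_t, \nabla f(w_t)\rangle \leq \langle w^{\ast}, \nabla f(w_t)\rangle$---goes through, and you are also right that the strictly-locally-quasi-convex hypothesis plays no role in the lemma itself.
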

We use Lemma~\ref{lem:qu} to prove Theorem~\ref{thm:quasi}.
\begin{proof}
Assume that,
$$f(w_t)-f(w^*) > \epsilon$$
Otherwise, the algorithm has reached the $\epsilon$-neighborhood of $w^*$. 
By the strictly-locally-quasi-convexity of $f$, we must have,
    $$\| \nabla f(w) \| > 0$$ and for every $x \in \mathbb{B}_{\epsilon / \kappa}(w)$ it holds that,
$$\langle \nabla f(w), x-w \rangle \leq 0$$ 
Now choose a point $y$ such that, 
$$y = w^* + \frac{\epsilon \nabla f(w_t)}{\kappa \| \nabla f(w_t) \|}$$ 
and,
$$ y \in \mathbb{B}_{\epsilon / \kappa}( w^* )$$
Then we have the following,
\begin{equation}\label{eq:5}
\begin{aligned}
&\hspace{1.33em}\relax   \langle \frac{\nabla f(w_t)}{\| \nabla f(w_t) \|}, y - w_t \rangle  \leq  0
\\& \equiv \langle \frac{\nabla f(w_t)}{\| \nabla f(w_t) \|}, \frac{\epsilon \nabla f(w_t)}{\kappa \| \nabla f(w_t) \|} + w^* - w_t \rangle \leq 0
\\& \equiv \langle \frac{\nabla f(w_t)}{\| \nabla f(w_t) \|}, w_t - w^*\rangle \geq \frac{\epsilon}{\kappa} 
\\& \equiv \langle \nabla f(w_t), w_t - w^*\rangle \geq \frac{\epsilon}{\kappa} \| \nabla f(w_t) \| 
\end{aligned}
\end{equation}

Case 1: ($L < \frac{\alpha \| \nabla f( w_t ) \|_{\ast} }{4}$. Set $\gamma_t = 1$):
\begin{equation} \label{res:1}
\begin{aligned}
f( w_{t+1} ) &\leq f(w_t) + \frac{\gamma_t }{2} \langle \nabla f(w_t), w^\ast - w_t \rangle
\\ &  \overset{(\ref{eq:5})}{\leq} f(w_t) - \frac{\gamma_t \epsilon}{2 \kappa} \| \nabla f(w_t) \| 
\\ &= f(w_t) - \frac{\epsilon}{2 \kappa} \| \nabla f(w_t) \| 
\end{aligned}
\end{equation}

Case 2: ($L \geq \frac{\alpha \| \nabla f( w_t ) \|_{\ast} }{4}$. Set $\gamma_t = \frac{\alpha  \| \nabla f( w_t ) \|_{\ast} }{4 L}$):
\begin{equation} \label{res:2}
\begin{aligned}
f( w_{t+1} ) &\leq f(w_t) + \frac{\gamma_t }{2} \langle \nabla f(w_t), w^\ast - w_t \rangle
\\ & = f(w_t) + \frac{\alpha  \| \nabla f( w_t ) \|_{\ast} }{16 L } \langle \nabla f(w_t), w^\ast - w_t \rangle
 \\ & \overset{(\ref{eq:5})}{=} f(w_t) - \frac{\alpha \epsilon \| \nabla f( w_t ) \|_{\ast} \| \nabla f(w_t) \|}{8 \kappa L }  
\\ & \leq f(w_t) - \frac{\alpha \epsilon \| \nabla f(w_t) \|^2_2 }{8 \kappa L } 
\end{aligned}
\end{equation}
By (\ref{res:1}) and (\ref{res:2}), we observe that the loss function 
monotonically decreases until it enters an $\epsilon$-neighborhood of the global minimum, 
thereby proving that FW with line search can converge 
within an $\epsilon$-neighborhood of the global minimum. 
To prove that the algorithm requires $t= O(max( \frac{1}{\epsilon^2}, \frac{1}{\epsilon^3}) )$, 
we use the additional assumption,
$$ f(w) - f(w^*) \geq \epsilon \rightarrow \| \nabla f(w) \| \geq \theta \epsilon$$ 
Now, assume that after iteration $t$ the algorithm reaches the target $\epsilon$-neighborhood.
Denote the solution vector at iteration $t$ as $w_t$.  Then, in case (1), we have,
\begin{equation}
f(w_t) \leq f(w_1) - \frac{t \epsilon^2 \theta}{2 \kappa}
\end{equation}
 or 
 \begin{equation}
    \begin{aligned}
        t &\leq \frac{ 2 \kappa (f(w_1) - f(w_t)) }{ \epsilon^2 \theta } 
        \\ &\leq \frac{ 2 \kappa (f(w_1) - f(w^*)) }{ \epsilon^2 \theta }
    \end{aligned}
\end{equation}
while for case (2), we have,
\begin{equation}
f(w_t) \leq f(w_1) - \frac{t \epsilon^3 \theta}{8 \kappa L}
\end{equation} or 
\begin{equation}
    \begin{aligned}
        t &\leq \frac{ 8 L \kappa (f(w_1) - f(w_t)) }{ \epsilon^3 \theta } 
        \\ &\leq \frac{ 8 L \kappa (f(w_1) - f(w^*)) }{ \epsilon^3 \theta }
    \end{aligned}
\end{equation}
This shows that it requires 
${t = O\left(max\left( \frac{1}{\epsilon^2}, \frac{1}{\epsilon^3}\right) \right)}$ 
iterations for the algorithm to produce an iterate that is within the 
$\epsilon-$neighborhood of the global minimum.

\end{proof}

\subsection{Proof of Theorem~\ref{thm:nonconvex}}
\begin{proof}  
Let $w_{\gamma} =   w_t + \gamma ( p_t - w_t)$ for some $\gamma \in [0, 1]$. Then, $f(w_{t+1}) \leq f(w_{\gamma})$ as $w_{t+1}$ is obtained by line search
and thus uses an optimal step size.

\begin{equation}\label{eq:1}
\begin{aligned}
    f(w_{t+1}) &\leq f(w_{\gamma})
\\&\leq f(w_t) + \gamma \langle \nabla f(w_t) , v_t - w_t \rangle 
\\&\quad+ \frac{\gamma^2 L \norm{v_t - w_t}^2}{2}  
\\& \overset{(0)}{\leq} f(w_t) + \gamma \langle c_t - w_t, \nabla f(w_t) \rangle 
\\&\quad + \frac{\gamma^2 L \norm{v_t - w_t}^2}{2}  \\
\end{aligned}
\end{equation}
    where (0) follows from $v_t = \argmin_{v \in X} \langle \nabla f(w_t) , v \rangle$.
    Let $c_t$ above be, $$c_t = \frac{u_t + v_t}{2} + \frac{ \alpha w_t \norm{w_t - v_t}^2}{ 8 }$$ 
    where $c_t \in \Omega$ by the definition of a strongly convex set.
Let us write,
$$u_t = \argmin_{ \| u \| \leq 1 } \langle u, \nabla f(w_t) \rangle = - \| \nabla f( w_t ) \|_{\ast}$$
where the last equality is obtained by the definition of the dual norm.
Then, 
\begin{equation}\label{eq:2}
 \begin{aligned}
\langle  c_t - w_t , \nabla f( w_t  ) \rangle  &\leq \frac{1}{2} \langle v_t - w_t , \nabla f( w_t  ) \rangle 
\\&\quad+ \frac{ \alpha}{ 8 } \langle \|  v_t - w_t \|^2 u_t, \nabla f( w_t  ) \rangle
\\&\leq \frac{1}{2} \langle v_t - w_t , \nabla f( w_t  ) \rangle 
\\&\quad- \frac{ \alpha}{ 8 } \|  v_t - w_t \|^2 \| \nabla f( w_t ) \|_{\ast}\\
&=   - \frac{k_t}{2}  - \frac{ \alpha}{ 8 } \|  v_t - w_t \|^2 \nabla \| f( w_t ) \|_{\ast}\\
  \end{aligned}
\end{equation}
where the last line is due to the definition of the FW gap.  
Combining (\ref{eq:1}) and (\ref{eq:2}) gives,
\[
f( w_{t+1} ) \leq f( w_t) - \frac{\gamma k_t}{2} + \frac{ \|  v_t - w_t \|^2 }{2} ( \gamma^2 L - \gamma \frac{\alpha \| \nabla f( w_t ) \|_{\ast} }{ 4 } )
\]
Case 1: $L \leq \frac{\alpha  \| \nabla f( w_t ) \|_{\ast} }{4}$, set $\gamma=1$, we get,
\[
f( w_{t+1} ) \leq f( w_t) - \frac{k_t}{2} 
\]
Case 2: $L \geq \frac{\alpha  \| \nabla f( w_t ) \|_{\ast} }{4}$, set $\gamma = \frac{\alpha  \| \nabla f( w_t ) \|_{\ast} }{4 L}$, we get,
\[
f( w_{t+1} ) \leq f( w_t) -  \frac{\alpha  k_t \| \nabla f( w_t ) \|_{\ast} }{8 L} 
\]
By recursively applying the above inequality, we get,
\[
    f( w_{t+1} ) \leq f( w_1) - \sum_{s=1}^t \min \left( \frac{k_s}{2},  \frac{\alpha k_s \| \nabla f( w_s ) \|_{\ast} }{8 L} \right)
\]
Denote,
$    \tilde{k}_t = \underset{ 1 \leq s \leq t }{ \min } k_s. $
We have,
\begin{equation} \label{eq: main}
    f( w_{t+1} ) \leq f( w_1) - \tilde{k}_t \sum_{s=1}^t \min \left( \frac{1}{2} ,  \frac{\alpha  \| \nabla f( w_s ) \|_{\ast} }{8 L} \right)
\end{equation}
Furthermore, if we assume that,
\begin{equation} 
\| \nabla f( w_s ) \|_{\ast} \geq c > 0 , \forall s
\end{equation}
then,
\[
  \tilde{k}_t \leq  \frac{ ( f( w_1) - f( w_t) )  }{ t \min \{ \frac{1}{2} ,  \frac{\alpha c }{8 L} \} } 
\]
Since $ f( w_1) - f( w_t)  \leq  f(w_1) - f(w^\ast) = \fix{\ell_1}$, we get,
\[
    \tilde{k}_t \leq  \frac{ \fix{\ell_1}  }{ t \min \{ \frac{1}{2} ,  C' \} }
\]
where $C' = \frac{\alpha c }{8 L}$.

Finally, by repeating the analysis of Lemma~\ref{lemma:probBounds} we find with probability
$1 - \delta$ that $c \leq \frac{\delta \sqrt{\pi}}{\sqrt{2d}}$ and thus 
\[
    \tilde{k}_t \leq  \frac{ \fix{\ell_1}  }{ t \min \{ \frac{1}{2} ,  C' \} }
\]
for $C' = \frac{\alpha \delta \sqrt{\pi}}{8 L \sqrt{2 d}}$.

\end{proof}

%!TEX root = fw_icml.tex

\section{Additional Experiments}
\label{app:add-expr}

In this appendix, we provide a more detailed version of our experimental results.
Our experiments aim to answer the following questions:
\begin{enumerate}
    \item In what situations do the projections become a performance bottleneck for gradient descent algorithms?  (Section~\ref{app:sec:overhead})

\item When optimizing convex functions over strongly convex sets, does Primal Averaging (PA) outperform standard FW in practice (as 
    our theory from Section~\ref{sec:convex} suggests)? If so, by how much?  (Sections~\ref{app:sec:PAConv} and~\ref{app:sec:PAPerf})

\item 
    Does PA also outperform projected gradient descent when optimizing convex functions over
    strongly convex sets?  If so, by how much?  (Section~\ref{app:sec:PAVsGD})

\item For strictly-locally-quasi-convex
	 loss functions~\cite{HLS15}, 
		does FW's  convergence rate in practice match our theoretical results from Section~\ref{sec:quasi}? 
        (Section~\ref{sec:FWSLQCExp})

\item When optimizing non-convex loss functions, 
    		how fast does FW converge in practice? Does it match our results from Section~\ref{sec:nonconvex}? 
            (Section~\ref{sec:NonCVExp})
\end{enumerate}

In summary, our empirical results show the following:
\begin{enumerate}
    \item Projections are costly and responsible for a considerable portion of the overall runtime of gradient descent,
	whenever
    the projection step has no closed-form solution (e.g., when the constraint set is an $l_{1.5}$ ball),  
    or
    when the closed-form solution itself is expensive (e.g., projecting a matrix onto a 
    nuclear norm ball, which requires computing the SVD~\cite{NuclearNormProjection}).
%    \jarrid{Though, so does the linear oracle for schatten $p$ balls, $p \in (1,2]$.}
%    \barzan{good point, so what should we do?}
%    \jarrid{On second thought, it doesn't seem like it's an issue not addressing this in the experiments}
\item 
    In practice, the convergence rate of primal averaging matches our theoretical
    result of $O( \frac{1}{t^2} )$ for smooth, convex functions with a strongly
    convex constraint set.  Furthermore, under these conditions, primal averaging
    outperforms  FW both with and without line search by 
    $3.7$--$15.6\times$ for a regression task and 
    $2.8$--$11.7\times$ for a
    matrix completion task 
    in terms of the overall optimization time. It also outperforms projected
    gradient descent by $7.7\times$ and can outperform stochastic gradient descent by up to $25.1\times$.
    
\item 
    When optimizing strictly-locally-quasi-convex functions over strongly
    convex constraint sets, FW with line search converges
    to an $\epsilon$-neighborhood of the global minimum within 
    $O( \max(\frac{1}{\epsilon^2}, \frac{1}{\epsilon^3}) )$ iterations, as predicted
    by Theorem~\ref{thm:quasi}.

\item 
    When the loss function is non-convex but the constraint set is strongly convex,
    FW with line search converges to a stationary point at a convergence rate of $O( \frac{1}{t} )$,
    as predicted by Theorem~\ref{thm:nonconvex}.

\end{enumerate}

\subsection{Experiment Setup}
\label{app:sec:expr:setup}

\ph{Hardware and Software} Unless stated otherwise,
	all experiments were conducted on a
	Red Hat Enterprise Linux 7.1
	 server with 112 Intel(R) Xeon(R) CPU E7-4850 v3 processors and 2.20GHz cores and 1T DDR4 memory.
	 All algorithms were implemented in Matlab R2015a. 
For projections that
required solving a convex optimization problem, we used the CVX  package~\cite{CVXPackage,CVXPaper}.

\begin{table*}[!th]
\centering
\begin{tabular}{|c|c|c|c|}
\hline
\textbf{Convexity of Loss Function} & \textbf{Loss Function} & \textbf{Constraint} & \textbf{Task}  \\ \hline
\multirow{2}{*}{Convex} 
 & Logistic Loss & $l_p$ norm & Classification \\ \cline{2-4}
& Quadratic Loss & $l_p$ norm & Regression \\ \cline{2-4}
 & Observed Quadratic Loss & Schatten-$p$ norm  & Matrix Completion \\ \hline
Strictly-Locally-Quasi-Convex & Squared Sigmoid & $l_p$ norm & Classification \\ \hline
Non-Convex & Bi-Weight Loss & $l_p$ norm & Robust Regression\\ \hline
\end{tabular}
\caption{Various loss  functions and constraint sets used in our experiments.}
\label{app:tbl:obj_func}
\end{table*}

\ph{Loss Functions} In our experiments, we used a variety of popular loss functions 
	 to cover various types of convexity and different types of machine learning tasks used in practice. 
	 These functions, summarized in Table~\ref{app:tbl:obj_func}, are as follows.
\begin{itemize}

\item \textbf{Logistic Loss.}
Logistic regression uses a convex 
loss 
 function, which is also
	commonly used in classification tasks~\cite{buja2005loss} 
	and is defined as: 
\begin{equation} \label{eqn:logistic_loss}
\ell(f(x_i), y_i) = \log(1 + e^{-y_if(x_i)})
\end{equation}
where $f$ is a hypothesis function for the learning task and $y_i$ is the target value corresponding to $x_i$.
 Logistic loss is often used with an $l_p$ norm
  constraint to avoid  overfitting~\cite{huang2011gradient}. 
 The optimization problem is thus stated as follows:
\begin{equation} \label{eqn:classification}
\begin{aligned}{}
& \underset{w \in \mathbb{R}^{d}, b \in \mathbb{R}}{\min}  \text{ }  \sum_{i=1}^N \ell\left(w^T x_i + b, y_i\right) \\
& \text{ s.t. } \| w \|_p \leq r.
\end{aligned}
\end{equation}
where $w$ is the coefficient vector, $b$ is
the linear offset, 
%$f(x_i) \equiv w^Tx_i + b$, 
%$x_i$ is the $ith$ member of dataset $X$, 
%$y_i$ is the label of $x_i$,
 $N$ is the number of data points, and $r$ is the radius of the $l_p$ norm ball.

\item \textbf{Quadratic Loss.} 
The quadratic loss is a convex loss function and is commonly used in regression tasks (a.k.a. least squares loss)~\cite{neter1996applied}:
\begin{equation} \label{eqn:quadratic_loss}
\psi(f(x_i), y_i) = (f(x_i)-y_i)^2
\end{equation}
Similar to logistic regression, a typical choice of constraint here is the $l_p$ norm.
The optimization is stated as follows:
\begin{equation} \label{eqn:regression}
\begin{aligned}{}
& \underset{w \in \mathbb{R}^{d}, b \in \mathbb{R}}{\min}  \text{ }  \sum_i \psi\left(w^Tx_i + b, y_i\right) \\
& \text{ s.t. } \| w \|_p \leq r.
\end{aligned}
\end{equation}

\item \textbf{Observed Quadratic Loss.}  
This loss function is also convex, but is typically used in matrix completion tasks~\cite{observedMtxLoss}, 
and is defined as:
\begin{equation} \label{eqn:observed_quadratic_loss}
\|X - M \|_{OB}^2 = \sum_{(i, j) \in P(M)} (X_{i,j} - M_{i,j})^2,
\end{equation}
where $X, M \in \mathbb{R}^{m \times n}$, $X$ is the estimated matrix, $M$ is the 
observed matrix, and $P(M) = \{ (i, j) : M_{i, j} \textrm{ is observed}\}$. 
In matrix completion, the loss function is often constrained within a Schatten-$p$ norm ball~\cite{schattenUsedOft1,schattenUsedOft2,schattenUsedOft3}, 
	 which is a convex constraint set.
Here, the optimization problem is stated as follows:
\begin{equation} \label{eqn:mc}
\begin{aligned}{}
& \underset{X \in \mathbb{R}^{m\times n}}{\min}  \text{ } \| {X - M} \|_{OB}^2 \\
& \text{ s.t. } \| X \|_{\mathbb{S}_p} \leq r
\end{aligned} 
\end{equation}
where $\| \cdot \|_{\mathbb{S}_p}$ is the Schatten-$p$ norm.
 
\item \textbf{Squared Sigmoid Loss.} 
    This function is non-convex\delete{~\cite{HLS15}},
    but it is strictly-locally-quasi-convex (see 
    \delete{Section~\ref{sec:quasi}}\fix{Section 3.1.1 of~\cite{HLS15}}),
	and is defined as:
    \begin{equation} \label{eqn:sigmoid}
    \varphi(z) = (1 + \exp(-z))^{-1}
    \end{equation}
where $z \in \mathbb{R}^n$.  We can state the optimization problem 
    as follows:
    \begin{equation} 
    \begin{aligned}{}
    & \underset{w \in \mathbb{R}^{d}, b \in \mathbb{R}}{\min}  \text{ }  \frac{1}{n} \sum_i^n (y_i - \varphi\left(w^Tx_i + b)\right)^2 \\
    & \text{ s.t. } \| w \|_p \leq r.
    \end{aligned}
    \end{equation}
    where $n$ is the number of data points.

\item \textbf{Bi-Weight Loss.} 
This loss function is non-convex, and is defined as follows: 
\begin{equation} \label{eqn:robust_loss}
\phi(f(x_i), y_i) = \dfrac{(f(x_i) -y_i)^2}{1+(f(x_i) - y_i)^2}
\end{equation}
 The bi-weight loss is typically used for robust regression tasks~\cite{biWeightLoss}.\footnote{Robust regression is less sensitive 
 	to outliers in the dataset.}
Using the $l_p$ norm as a constraint, the optimization problem here is stated as follows:
\begin{equation} \label{eqn:robust}
\begin{aligned}{}
& \underset{w \in \mathbb{R}^{d}, b \in \mathbb{R}}{\min}  \text{ }  \sum_i \phi\left(w^Tx_i + b, y_i\right) \\
& \text{ s.t. } \| w \|_p \leq r.
\end{aligned}
\end{equation}
 
\end{itemize}

\ph{Datasets} We ran our experiments using several datasets of different sizes and dimensionalities:
\begin{itemize} 
    \item For regression tasks, we used the YearPredictionMSD dataset \cite{UCIMLRepo},
    comprised of 515,345 observations and 90 real-valued features (428 MB in size).
    The regression goal is to predict the year a song was released based on its audio features.

\ignore{
\item \delete{For robust regression tasks, we used the NOxEmissions dataset \cite{NeedCitationForNOxEmissions},
    which contains  8,088 observations and 4  real-valued features. 
    The goal here is to predict an hourly measurement of
    $NOx$ pollution level in the ambient air
(robust regression is often applied to small datasets, as such datasets are more susceptible to outliers~\cite{CiteRobustBaseGuide}). }
\jarrid{We didn't actually use NOxEmissions, remove.}
}

\item   For classification tasks, we used the well-known Adult dataset \cite{UCIMLRepo}, 
	which offers various demographics (14 features) about 48,842 individuals across the United States. 
	The goal is to predict whether an individual earns more than \$50K per year, given 
	 	his/her demographics. 
%		There are both continuous and categorical attributes for  each observation.

    \item   For matrix completion tasks, we used two versions of the MovieLens dataset~\cite{movielensDataset}, 
       one with  100K observations of movie ratings from 943 users on 1682 movies, 
       and the other with 1M observations of movie ratings from 6,040 users on 3,900 movies.
\end{itemize}

\ph{Compared Methods} We compared different variants of both gradient descent as well as FW optimization: 
 
\begin{itemize}
%Gradient descent. Gradient descent is an iterative method which finds the local minimum of a function by moving towards the direction of the negative gradient of the function. In our experiment, we consider the following two versions of gradient descent.

\item \textbf{Standard Gradient Descent (GD).} In the $k^{th}$ iteration, GD moves the opposite direction of the gradient:
    \begin{equation}
    	w^{(k+1)} = w^{(k)} - \eta \sum_{x_i} \nabla f(x_i)
    \end{equation}
    where $f(x_i)$ is the loss on data point $x_i$.

\item \textbf{Stochastic Gradient Descent (SGD).} Unlike GD, SGD uses only one data point in each iteration:
    \begin{equation}
    	w^{(k+1)} = w^{(k)} - \eta \nabla f(x_{k+1})
    \end{equation}

\item \textbf{Standard Frank-Wolfe with Predefined Learning Rate (FWPLR).} 
    This variant of Frank-Wolfe corresponds to Algorithm~\ref{alg:fw} with option (A).
    
\item \textbf{Standard Frank-Wolfe with Line Search (FWLS).}  
    This  corresponds to Algorithm~\ref{alg:fw} with option (B). 

\ignore{
\item \textbf{Stochastic Frank-Wolfe (SFW).}  
     This variant of the standard FW
	 uses a random sample of the dataset at each iteration 
     to compute the gradient, which is then passed to a linear oracle to compute
     $max_{v \in \Omega} \langle v, d \rangle$, given some input direction vector $d$~\cite{reddi2016stochastic}. 
      \jarrid{We also still don't have experiments for this (ask Jun Kun?)}
      \barzan{if we dont have any results for this, then remove it. but PA better be faster than both GD and SGD}
}

\item \textbf{Primal Averaging (PA).} Primal averaging is the variant of FW algorithm, which we advocate in this paper.
    This algorithm was presented in Algorithm~\ref{alg:Nest2}.

\item \textbf{Stochastic Primal Averaging (SPA).} This is the stochastic version of PA, as described in Section~\ref{sec:spa}.
 
\end{itemize}

\subsection{Projection Overhead in Gradient Descent}
\label{app:sec:overhead}

\begin{figure}[t]
\begin{center}
\includegraphics[width=0.5\textwidth]{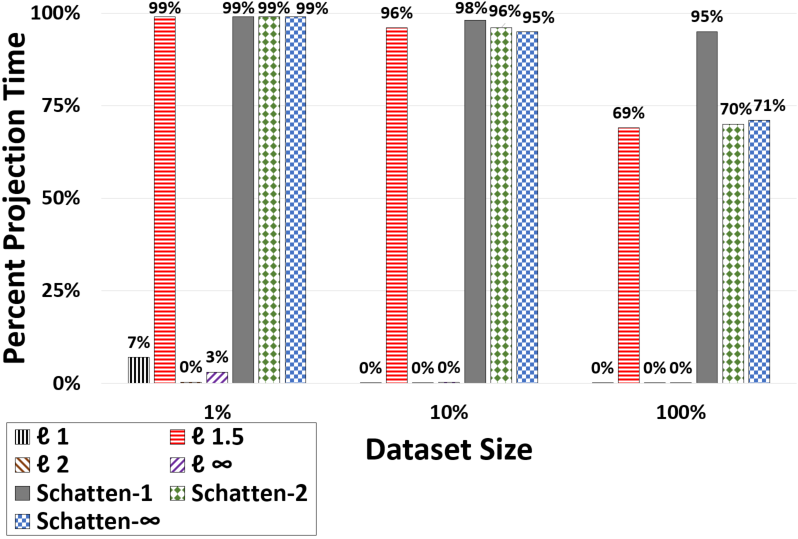}
\caption{The amount of overall time spent on projection in gradient descent.}
\label{fig:percGDOptTimeInProjStep}
\end{center}
\end{figure}

\begin{figure}[t]
\begin{center}
\includegraphics[width=0.5\textwidth]{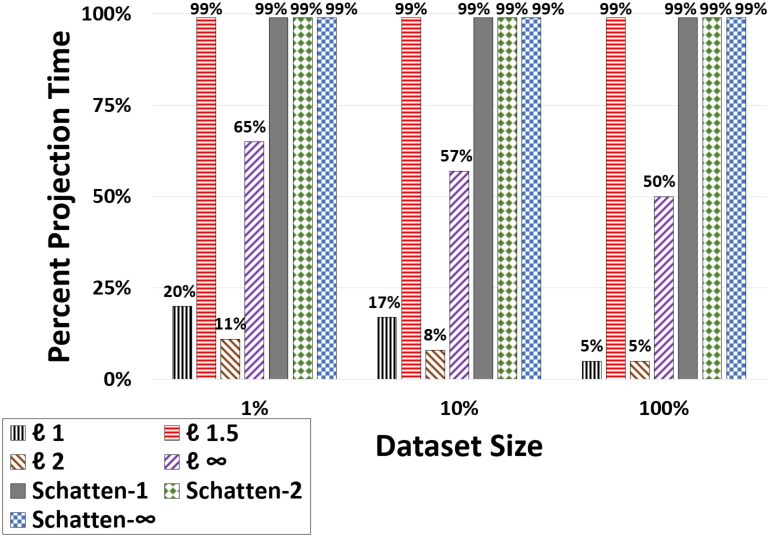}
\caption{The amount of overall time spent on projection in stochastic gradient descent.}
\label{fig:percSGDOptTimeInProjStep}
\end{center}
\end{figure}

    To understand the overhead of the projection step in gradient descent algorithm, we experimented with 
	various machine learning tasks and constraint sets.
    Specifically, we studied $l_1$, $l_{1.5}$, $l_2$, $l_\infty$, Schatten-$1$, Schatten-$2$, and Schatten-$\infty$ norms as 
 	our constraint sets. 
    We used the $l_p$ norm balls in a logistic loss classifier (Adult dataset) and used the Schatten-$p$ norms 
 	in a matrix completion task (MovieLens dataset) with observed quadratic loss (see Section~\ref{app:sec:expr:setup} and Table~\ref{app:tbl:obj_func}). 
To study the effect of data size, we also ran each experiment using different portions of its respective dataset: 1\%, 10\%, and 100\%.

These constraint sets can be divided into three categories~\cite{H14}:
(i) projection onto the $l_1$, $l_2$, and $l_\infty$ balls have a closed-form and thus can be computed efficiently,
(ii) projection onto Schatten-$1$ (a.k.a. nuclear or trace norm), Schatten-$2$, and Schatten-$\infty$ norms has a closed-form 
     but the closed-form requires the SVD of the model matrix,
            and is thus costly,
	and (iii) projection onto $l_{1.5}$ balls does not have any closed-form and requires 
		solving another optimization problem.

Figure~\ref{fig:percGDOptTimeInProjStep} shows the average portion of the total time spent in each iteration of 
	the gradient descent in performing the projection step. 
As expected, the projection step did not account for much of the overall runtime when there was an efficient closed-form, 
i.e., less than $7\%$, $0.03\%$, and $3\%$ for the $l_1$, $l_2$, and $l_\infty$ norms, respectively.
In contrast, 
	projections that involved a costly closed-form or required solving a separate optimization problem
		introduced a significant overhead. 
Specifically, the projection time was responsible for 69--99\% of the overall runtime
	for $l_{1.5}$, 
	 $95$--$99\%$ for Schatten-$1$, $70$--$99\%$ for Schatten-$2$, and 
      	$71$--$99\%$ for Schatten-$\infty$.

Another observation is that this overhead decreased with the data size. 
	This is expected, as the cost of computing the gradient itself in GD grows with the 
		data size and becomes the dominating factor, hence reducing the relative ratio of the projection time
			to the overall runtime of each iteration. 
This is why, for massive datasets, stochastic gradient descent (SGD) is much more popular
	than standard GD~\cite{hogwild,reAsyncSGD,schattenUsedOft3}. 
Therefore, we measured the projection overhead for SGD as well.
However, since SGD's runtime does not 
	depend on the overall data size,
		we did not vary the dataset size.
		
% \barzan{is this true? if not, replace w/ `is less sensitive to the overall datasize}
%    \jarrid{
%        SGD's convergence rate doesn't depend on data size at all, but I think to better parallelize we sometimes
%        break up the gradient into batches (compute the gradient on, say, 50 examples) instead of a single example 
%        at once.  We could argue SGD is sensitive to data size in that way.
%    }
The results for SGD are shown in Figure~\ref{fig:percSGDOptTimeInProjStep}.
The trend here is similar to GD, but the overheads are more pronounced. 
Constraint sets without a computationally efficient projection caused a significant overhead in SGD.  
However, for SGD, even the projections with
efficient closed-form solutions introduced a noticeable overhead: $5$--$20\%$ for $l_1$, $5$--$11\%$ for $l_2$, and $50$--$65\%$ for $l_\infty$.
While SGD takes significantly less time than GD to compute its descent direction for large datasets, the time to compute the projection
remains constant.  Hence, the fraction of the overall computation time spent on projection 
    is larger in SGD than in GD.
    
The reason for the particularly higher overhead in case of $l_\infty$ is that
projecting onto an $l_\infty$ ball cannot be vectorized. In other words, projection onto $l_1$ and $l_2$ balls 
	can 	 take better advantage of the underlying hardware than projection onto an $l_\infty$ ball, causing the observed disparity in runtimes.
In summary, the projection overhead is a major concern for both GD and SGD, whenever there is no efficient closed-form.
	Furthermore, this problem is still important for SGD, even when there is an efficient procedure for projection.

\subsection{Primal Averaging's Convergence Rate}
\label{app:sec:PAConv}

\begin{figure}[!h]
\centering     %%% not \center
\begin{subfigure}[t]{0.45\linewidth}
\centering
    \includegraphics[width=\linewidth, height=50mm, keepaspectratio]{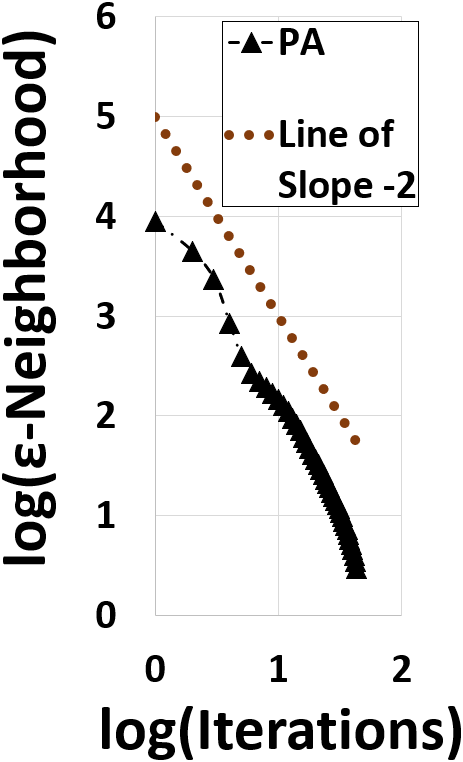}
    \caption{Classification with $l_2$ norm.}
    \label{fig:L2PAConvRate:app}
\end{subfigure}
\hspace{0.5cm}
\begin{subfigure}[t]{0.45\linewidth}
\centering
    \includegraphics[width=\linewidth, height=50mm, keepaspectratio]{plots/PrimalAveragingMtxCompLogConvRate.png}
    \centering
    \caption{Matrix completion with Schatten-$2$ norm.}
    \label{fig:MtxCmpPAConvRate:app}
\end{subfigure}
\caption{Convergence rate of PA on classification and matrix completion tasks.
} 
\label{fig:PAConvRate} %awaz
\vskip -0.1in
\end{figure}

%\begin{figure}[h]
%\centering
%\begin{subfigure}[t]{.45\linewidth}
%    \centering
%    \includegraphics[width=\linewidth,height=0.8]{plots/PrimalAveragingClassificationLogConvRate.png}
%    \caption{}
%    \label{fig:cost_breakdown} 
%\end{subfigure}
%\begin{subfigure}[t]{.45\linewidth}
%    \centering
%    \includegraphics[width=\linewidth]{plots/PrimalAveragingMtxCompLogConvRate.png}
%    \caption{}
%    \label{fig:cost_breakdown_forecast}
%\end{subfigure}
%\caption{\emph{Source: DOE SSL Roundtable and Workshop attendees}}
%\label{fig:costs_BD_FC}
%\end{figure}

In this section, we report  experiments on various   
machine learning tasks and datasets to compare PA's performance against 
other variants of FW when solving (smooth)
convex functions with strongly convex  constraint sets. 
 In particular,
	we studied the performance of PA, FWLS, and FWPLR for both $l_2$ and Schatten-$2$ balls as our strongly convex constraint sets (see Section~\ref{sec:prelim}
for a discussion of why these constraints are strongly convex).  
We used the $l_2$ norm ball
for a logistic classifier on the Adult dataset, as well as a linear regression task on the YearPredictionMSD dataset. 
We used the Schatten-$2$ norm ball for a matrix completion task on the MovieLens dataset.

 First, we measured the $\epsilon$-neighborhood of the global
	minimum reached by PA at each iteration.
Our theoretical results (Theorem~\ref{thm:convex})
	predict a convergence rate of 
	$O( \frac{1}{t^2} )$ in this case.
To confirm this empirically, we plotted the logarithm of the  $\epsilon$-neighborhood
	against the logarithm of the iteration number.
	If the convergence rate of  $O( \frac{1}{t^2} )$ were to hold,
		we would expect a straight line with a slope of $-2$ after taking the logarithms.
	 
The plots are shown in
	Figures~\ref{fig:L2PAConvRate:app} and~\ref{fig:MtxCmpPAConvRate:app}	
	 for the classification and matrix completion
tasks, respectively.  The results confirm our theoretical results, as the plots 
exhibit a slope of -2.34 and -2.41, respectively. 
	Note that a slightly steeper slope is expected in practice, since 
    our theoretical results only provide a worst-case upper bound on the convergence rate.
\ignore{When a power function
was fit to the data in Figure~\ref{fig:PAConvRate} we found equations of
$O( \frac{1}{t^{2.344}} )$ and $O( \frac{1}{t^{2.41}} )$ for the classification
and matrix completion task, respectively.}

\subsection{Primal Averaging's Performance versus Other FW Variants}
\label{app:sec:PAPerf}

\begin{figure}[t]
\centering     %%% not \center
\begin{subfigure}[t]{0.75\linewidth}
\centering
    \includegraphics[width=1\linewidth, keepaspectratio]{plots/RegressionLossFunction.PNG}
    \caption{Regression with $l_2$ norm.}
    \label{fig:L2Time:app}
\end{subfigure}
\begin{subfigure}[t]{0.77\linewidth}
\centering
    \includegraphics[width=1\linewidth, keepaspectratio]{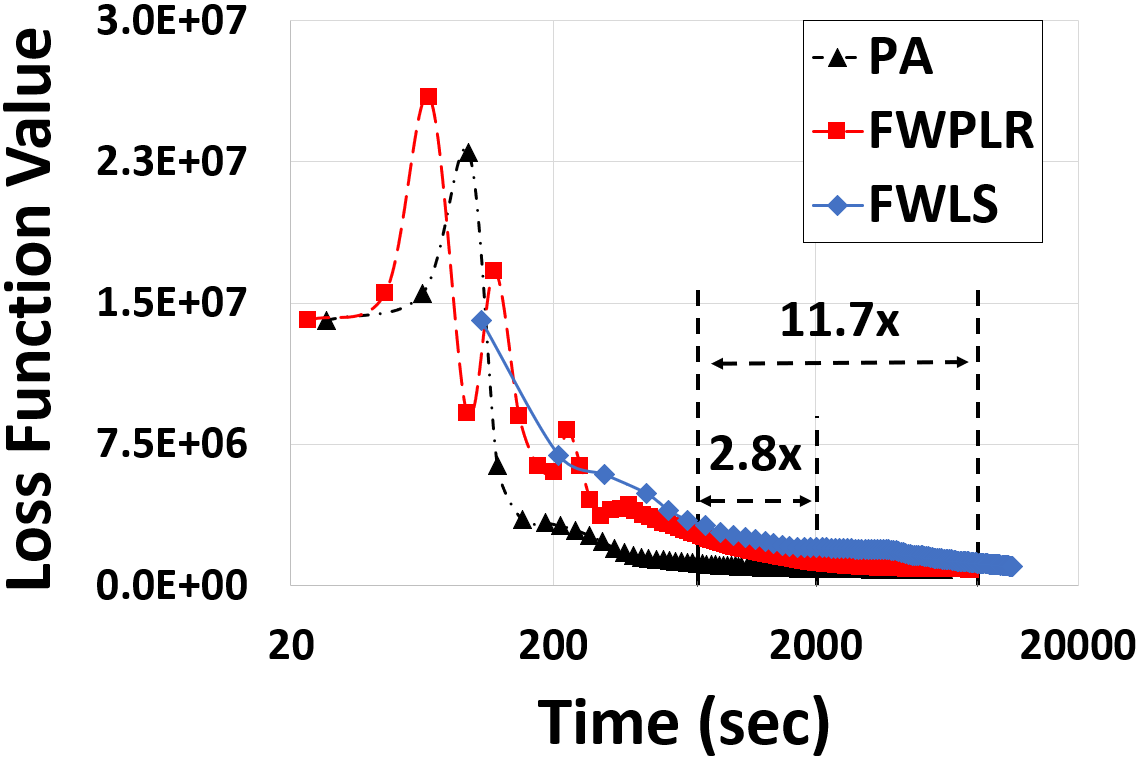}
    \centering
    \caption{Matrix completion with Schatten-$2$ norm.}
    \label{fig:MtxCmpTime}
\end{subfigure}
\caption{Performance of different Frank-Wolfe variants on regression and matrix completion tasks.}
\label{fig:FWTimes}
\vskip -0.1in
\end{figure}

\begin{figure}[t]
\begin{center}
\includegraphics[width=0.3\textwidth]{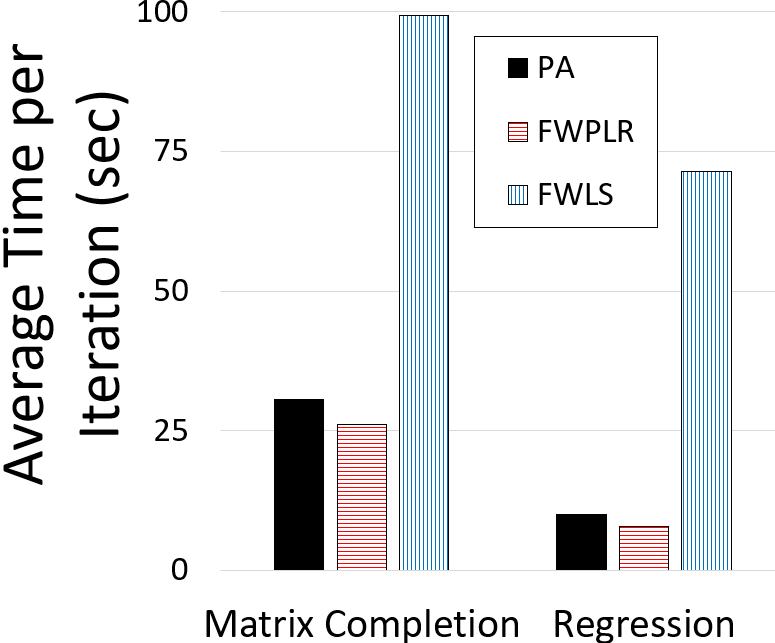}
\caption{Average iteration time of PA, FWPLR, and FWLS on regression and matrix completion tasks.}
\label{fig:AvgIterTimeFWVars}
\end{center}
\end{figure}

\begin{figure}[t]
\centering
\begin{subfigure}[t]{0.6\linewidth}
    \centering
    \includegraphics[width=1\linewidth, keepaspectratio]{plots/PAVsGDLP20.PNG}
    \caption{PA versus GD.}
    \label{fig:DetSPAVsSGD:app}
\end{subfigure}

\begin{subfigure}[t]{0.6\linewidth}
    \centering
    \includegraphics[width=1\linewidth, keepaspectratio]{plots/SGDBetterThanSPA.PNG}
    \caption{SPA versus SGD with $l_2$ norm.}
    \label{fig:SGDGtSPA:app}
\end{subfigure}

\begin{subfigure}[t]{0.6\linewidth}
    \centering
    \includegraphics[width=1\linewidth, keepaspectratio]{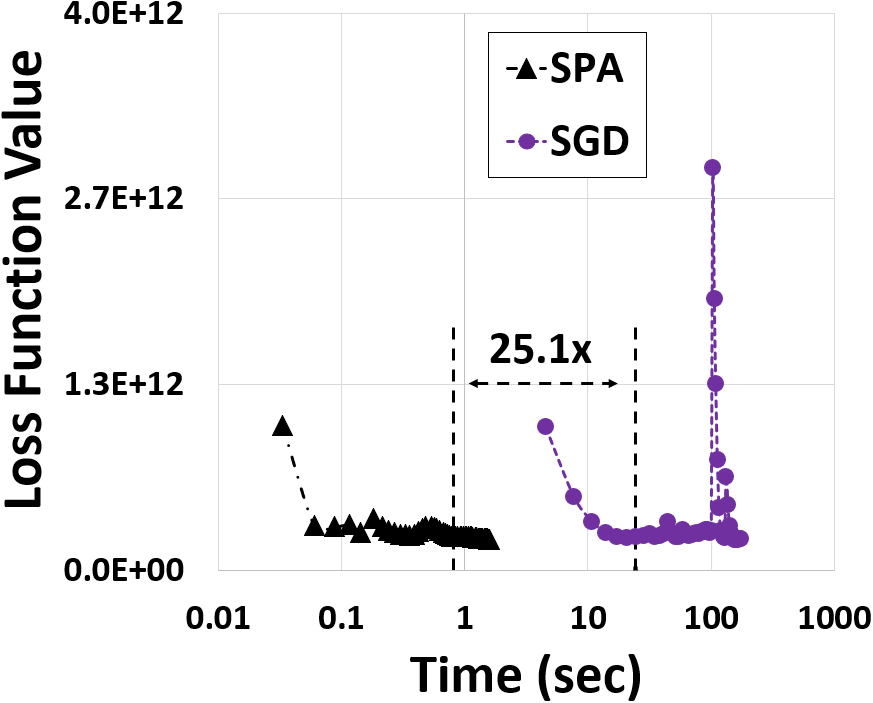}
    \caption{SPA vs SGD with $l_{1.1}$ norm.}
    \label{fig:SPAGtSGD}
\end{subfigure}
\caption{Performance of PA  versus GD (and their stochastic versions, SPA and SGD) on the regression task.}
\label{fig:PAVsGD}
\end{figure}

To compare the actual performance of various FW variants on convex functions,
	we used the same settings as Section~\ref{app:sec:PAConv}.
However, instead of the number of iterations to convergence, 
	this time we measured the actual runtimes. 

Here, we compared all three variants: PA, FWLS, and FWPLR.
Figures~\ref{fig:L2Time:app} and~\ref{fig:MtxCmpTime} report the time taken to achieve each value of the loss
function for the regression and matrix completion tasks, respectively.
    To compare the performance of these algorithms, we measured the difference between
    the time it took for each of them to converge.  
 %ta inja   
    To determine convergence, here picked the first iteration at which 
    		the loss value was within $\pm 2\%$ of the previous loss value (practical convergence),
			and was also within $\pm 2\%$ of the global minimum (actual convergence). 
	The first time at which these iterations were reached for each algorithm are marked by  vertical, striped lines in Figures~\ref{fig:L2Time:app} 	
    and~\ref{fig:MtxCmpTime}.
 
 %	 have marked the first iteration at which 
%To measure the performance of the algorithms, we selected a value of the loss function
%which indicated sufficient convergence to the global minimum.
 
For the regression task, PA converged 
 $3.7\times$ and $15.6\times$  faster than FWPLR and FWLS, respectively.  
For the matrix completion task, PA converged $2.8\times$  and $11.7\times$ faster 
than FWPLR and FWLS, respectively.
These considerable speedups have significant ramifications in practice.
Traditionally, PA has been shied  away from, simply because it is slower in each iteration 
(due to PA's use of auxiliary sequences and its extra summation step),
while its convergence rate was believed to be the same as the more efficient variants~\cite{L13}.
However, as we formally proved in Section~\ref{sec:convex}, 
	PA does indeed converge within much fewer iterations.
Thus, the results shown in 
	Figures~\ref{fig:L2Time:app} and~\ref{fig:MtxCmpTime}
		validate our hypothesis that PA's faster convergence rate more than compensates for its additional computation at each iteration. 
Figure~\ref{fig:AvgIterTimeFWVars} 
 reports the  per-iteration cost of these FW variants on average, 
 showing that PA is only 1.2--1.3x slower than than FWPLR in each iteration. 
This is why PA's much faster convergence rate leads to much better performance in practice, compared to FWPLR.
On the other hand, 
	although FWLS offers the same convergence rate as PA, 
    PA's cost per iteration is $3.2$--$7.1\times$ faster than FWLS,
			which also explains PA's superior performance over FWLS.

Finally, we note that PA's improvements were much more drastic for the regression
task than the matrix completion task ($3.7$--$15.6\times$ versus $2.8$--$11.7\times$).
 This is due to of the following reason.
     We recall that the Schatten-$p$ norm ball with radius $r$ is $\alpha$-strongly convex for 
    $p \in (1, 2]$ and with $\alpha = \frac{p - 1}{r}$.
  The matrix completion task on the MovieLens dataset requires us to predict the
values of a $6,040 \times 3,900$ matrix ($6,040$ users and $3,900$ movies).  Thus, to be able
to maintain a reasonable number of potential matrices within our constraint set, we had to set 
$r = 12000$, namely $\alpha=\frac{1}{12000}$.  
According to Theorem~\ref{thm:convex}, the convergence rate is $O( \frac{L}{\alpha^2 g^2 t^2} )$,
which is why a small value of $\alpha$ slows down PA's convergence.

\subsection{Primal Averaging's Performance versus Projected Gradient Descent}
\label{app:sec:PAVsGD}

    In this section, we compare the performance of PA and projected gradient descent.  
    We evaluated deterministic and stochastic versions of both algorithms on the regression task 
    with the same settings as in Section~\ref{app:sec:PAPerf}.  
    	  We used the same methodology to determine
    convergence as in Section~\ref{app:sec:PAPerf}.

	The results are shown in     Figure~\ref{fig:PAVsGD}.
    As expected, PA significantly outperformed projected GD, converging
    $7.7\times$ faster (Figure~\ref{fig:DetSPAVsSGD:app}).  
    To better compare their stochastic versions (SPA and SGD), however, 
  	 we used two  different settings.  The first used
    the $l_2$ ball as the constraint set, as an example of a case with an efficient projection, 
    and the second used the $l_{1.1}$ ball as an example of a case with a costly projection.
     
    The results   conformed with our expectation again.
    When the projection onto the constraint set was efficient,
    SGD converged $4.6\times$  faster than SPA %.  In this case, stochastic PA converged slower than stochastic
    (Figure~\ref{fig:SGDGtSPA}).  On the other hand, when the projection   was costly,
    SPA far outperformed SGD, converging $25.1\times$ faster (Figure~\ref{fig:SPAGtSGD}).

\subsection{Frank-Wolfe for (Smooth) Strictly-Locally-Quasi-Convex Functions}
\label{sec:FWSLQCExp}

\begin{figure}[!htbp]
\centering     %%% not \center
\begin{subfigure}[t]{0.4\linewidth}
\centering
    \includegraphics[height=50mm, keepaspectratio]{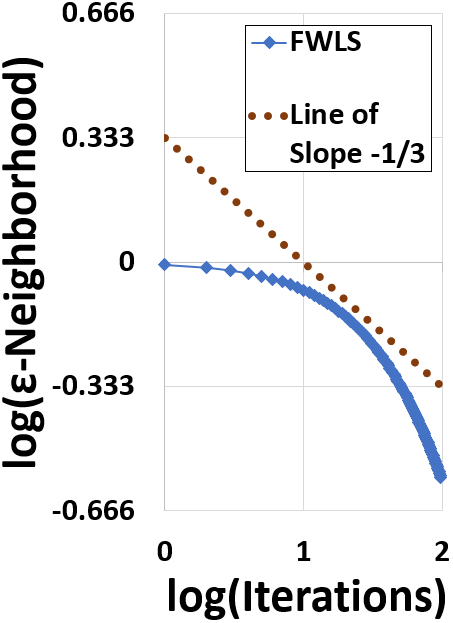}
    \centering
    \caption{Convergence rate when \newline $\epsilon < 1$.}
    \label{fig:SLQCEpsilonRateEpsLeq1}
\end{subfigure}
\hspace*{1.5cm}
\begin{subfigure}[t]{0.4\linewidth}
\centering
    \includegraphics[height=50mm, keepaspectratio]{plots/SLQCEpsGreater1VsIter.PNG}
    \caption{Convergence rate when \newline $\epsilon > 1$.} 
    \label{fig:SLQCEpsilonRateEpsGeq1}
\end{subfigure}
\caption{Iterations for FWLS to converge to an $\epsilon$-neighborhood when optimizing a strictly-locally-quasi-convex function.}
\label{fig:SLQCEpsilonRate}
\end{figure}

According to Theorem~\ref{thm:quasi},  even when the loss function is not convex, 
	FWLS  still converges (to an $\epsilon$-neighborhood
    of the global minimum) within $O(max( \frac{1}{\epsilon^2}, \frac{1}{\epsilon^3}))$ iterations,
as long as the loss function is strictly-locally-quasi-convex. 
%    and the constraint set is strongly convex, FWLS can converge to .
   To verify this empiricially, we used the squared Sigmoid loss
   	function\footnote{See Section~\ref{app:sec:expr:setup} for a discussion of the strictly-locally-quasi-convexity of squared Sigmoid loss.}
    	for a classification
    task (Adult dataset) with an $l_2$ ball as our constraint set.

Note  that, to conform with our theoretical result, 
    FWLS must exhibit an $O\left( \frac{1}{t^{1 / 2}} \right)$ convergence rate when $\epsilon > 1$ and an
    $O\left( \frac{1}{t^{1 / 3}} \right)$ convergence rate when $\epsilon < 1$.
       To better illustrate this difference, we examine two plots:
      Figure~\ref{fig:SLQCEpsilonRateEpsGeq1} displays the iterations where $\epsilon > 1$,
      while Figure~\ref{fig:SLQCEpsilonRateEpsLeq1} displays the iteration where $\epsilon < 1$.  
      Both plots show the logarithm of the $\epsilon$-neighborhood against the logarithm of the
      iteration number.  This means we should expect to see the loss values decreasing at a slope steeper than or equal to $-\frac{1}{2}$ and
      $-\frac{1}{3}$ in Figure~\ref{fig:SLQCEpsilonRateEpsGeq1} and Figure~\ref{fig:SLQCEpsilonRateEpsLeq1}, respectively.
 
       The plots confirm our theoretical results, exhibiting a slope of $-2.12$ when $\epsilon > 1$ and 
      $-0.377$ when $\epsilon < 1$.  Note that the steeper slopes here are expected,  as 
      our theoretical results only provide a worst-case upper bound on the convergence rate.
      Notably, FWLS showed a significantly steeper slope when $\epsilon > 1$.
      We observe that the convergence rate bound for $\epsilon > 1$
      is missing the smoothness parameter $L$ of the $\epsilon < 1$ bound.  It is noted in~\cite{HLS15}
      that using the squared Sigmoid loss is equivalent to the perceptron problem with a $\gamma$-margin, and
      Kalai and  Sastry~\cite{SLQCExpLossIsPerceptron} show that the smoothness parameter $L$ of  the latter
       is $\frac{1}{\gamma}$.
       \footnote{The     $\gamma$-margin can intuitively be thought of as the difficulty of the classification task, 
        	with a larger margin indicating an easier classification.}
      Thus, when the margin is large, the $\epsilon < 1$ case is able to converge at a rate closer
      to $O\left( \frac{1}{t^{1 / 3}} \right)$ than the $\epsilon > 1$ case to its rate of
      $O\left( \frac{1}{t^{1 / 2}} \right)$. Thus, we hypothesize that it is the absence of 
      the $L$ factor in the upper bound for $\epsilon > 1$, which
      primarily contributes to its convergence rate being faster than $- \frac{1}{2}$   when $\epsilon > 1$.
  
\begin{figure}
\inv
\centering
\includegraphics[height=50mm, keepaspectratio]{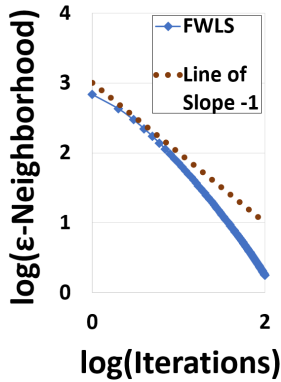}
\centering
\caption{Performance of FWLS for regression with a non-convex loss.}
\label{fig:FWLSNonConvex:app}
\end{figure}

\subsection{Frank-Wolfe for (Smooth) Non-Convex Functions}
\label{sec:NonCVExp}

% Hack to make figs display in better place
In Theorem~\ref{thm:nonconvex}, we proved that FWLS  converges  to a stationary
    point of (smooth) non-convex functions at a rate of $O( \frac{1}{t} )$,
  as long as it is constrained to a strongly convex set.  
To empirically verify whether this upper bound is tight, 
	we use the bi-weight loss (see Section~\ref{app:sec:expr:setup}
    and Table~\ref{app:tbl:obj_func}) in a classification task (Adult dataset) with an $l_2$ ball constraint. 

    In Figure~\ref{fig:FWLSNonConvex:app}, we measured the $\epsilon$-neighborhood reached by FWLS at each iteration, plotting 
    the logarithm of the $\epsilon$-neighborhood against the logarithm of the iteration number.  To confirm the 
    $O( \frac{1}{t} )$ convergence rate found in Theorem~\ref{thm:nonconvex}, we expect to see a straight line of
    slope $-1$ in Figure~\ref{fig:FWLSNonConvex:app}.

    The empirical results confirm our theoretical results, showing a slope of $-1.46$.  Again, we note that
    a steeper slope is expected in practice as Theorem~\ref{thm:nonconvex} only provides a worst-case upper bound
    on the convergence rate.

%\clearpage\newpage
%
%\bibliographystyle{aaai}
%\bibliography{fw}  % vldb_sample.bib is the name of the Bibliography in this case

\end{document}